\journal{Applied and Computational Harmonic Analysis}
\def\sign{{\rm sign}}
\def\t0{{t_0}}
\def\E{{\mathbb E}}
\def\L{{\mathcal L}}
\def\Z{{\mathbb Z}}        
\def\R{{\mathbb R}}        
\def\Prob{{\bf Prob}}
\def\dive{{\rm div}}
\newtheorem{thm}{Theorem}
\newtheorem{coll}{Corollary}
\newtheorem{lem}{Lemma}
\begin{document}

\begin{frontmatter}



\title{Analysis of Crowdsourced Sampling Strategies for \\ HodgeRank with Sparse Random Graphs}


\author[utah]{Braxton Osting\corref{CA}}
\address[utah]{Department of Mathematics, University of Utah, Salt Lake City, UT 84112 USA}
\ead{osting@math.utah.edu}
\author[PKU]{Jiechao~Xiong}
\ead{xiongjiechao@pku.edu.cn}
\author[BICMR]{Qianqian~Xu}
\ead{xuqianqian@math.pku.edu.cn}
\address[BICMR]{BICMR, Peking University, Beijing 100871, China}
\author[PKU]{Yuan~Yao\corref{CA}}
\cortext[CA]{Corresponding author.}
\ead{yuany@math.pku.edu.cn}
\address[PKU]{School of Mathematical Sciences, BICMR-LMAM-LMEQF-LMP, Peking University, Beijing 100871, China}


\begin{abstract}
Crowdsourcing platforms are now extensively used for conducting subjective pairwise comparison studies.
 In this setting, a pairwise comparison dataset is typically gathered via random sampling, either \emph{with} or \emph{without} replacement. In this paper, we use tools from random graph theory to analyze these two random sampling methods for the HodgeRank estimator. Using the Fiedler value of the graph as a measurement for estimator stability (informativeness), we provide a new estimate of the Fiedler value for these two random graph models. In the asymptotic limit as the number of vertices tends to infinity, we prove the validity of the estimate. Based on our findings, for a small number of items to be compared, we recommend a two-stage sampling strategy where a greedy sampling method  is used initially and random sampling \emph{without} replacement is used in the second stage. When a large number of items is to be compared, we recommend random sampling with replacement as this is computationally inexpensive and trivially parallelizable. Experiments on synthetic and real-world datasets support our analysis.
\end{abstract}

\begin{keyword}
 Crowdsourcing \sep Paired Comparison \sep Algebraic Connectivity \sep Erd\"os-R\'enyi Random Graph


\end{keyword}

\end{frontmatter}

\clearpage
\section{Introduction}
\label{Intro}

With the advent of ubiquitous internet access and the growth of crowdsourcing platforms (\emph{e.g.},
\href{https://www.mturk.com}{MTurk},
\href{http://www.innocentive.com/}{InnoCentive},
\href{http://crowdflower.com/}{CrowdFlower},
\href{http://www.crowdrank.net/}{CrowdRank}, and
\href{http://www.allourideas.org/}{AllOurIdeas}),
the crowdsourcing strategy is now employed by a variety of communities.
Crowdsourcing enables researchers to conduct social experiments on a heterogenous set of participants and at a lower economic cost than conventional laboratory studies. For example, researchers can harness internet users to conduct user studies
on their personal computers.
Among various approaches to conduct subjective tests, pairwise comparisons are expected to yield more reliable results. However, in crowdsourced studies, the individuals performing the ratings are diverse compared to more controlled settings, which is difficult to control for using traditional experimental designs; researchers have recently proposed several randomized methods to conduct user studies \cite{added,MM11,tmm12}, which accommodate incomplete and imbalanced data.

HodgeRank, as an application of combinatorial Hodge theory to the preference or rank aggregation problem from pairwise comparison data, possibly being incomplete and imbalanced, was first introduced by \cite{Hodge}, and inspired  a series of studies in statistical ranking \cite{Hirani11,hodge_l1,osting2013enhanced,Osting:2012b}. Hodge theory has also found applications in game theory \cite{Parrilo11_gameflow} and computer vision \cite{Yuan09_hodge,Osher11_retinex}, in addition to traditional applications in fluid mechanics \cite{Chorin93}  \emph{etc}.
HodgeRank formulates the ranking problem in terms of the discrete Hodge decomposition of the pairwise data and shows that it can be decomposed into three orthogonal components:
a gradient flow representing a global rating (optimal in the $L_2$-norm sense),
a triangular curl flow representing local inconsistency, and
a harmonic flow representing global inconsistency.
 Such a perspective generalizes  various linear statistical models to provide a universal geometric description of the structure of paired comparison data, which is possibly incomplete and imbalanced due to crowdsourcing.

The two most popular random sampling schemes in crowdsourcing experiments are random sampling \emph{with} replacement and random sampling \emph{without} replacement. In random sampling \emph{with} replacement, one selects a comparison pair randomly from the whole dataset regardless if the pair has been selected before; whence it is memory free.
In random sampling \emph{without} replacement, each comparison pair in the dataset has an equal chance of being selected; once selected it cannot be chosen again until all possible pairs have been chosen. The simplest model of random sampling \emph{without} replacement in paired comparisons is the Erd\"{o}s-R\'{e}nyi random graph, which is a stochastic process that starts with \emph{n} vertices and no edges, and at each step adds one new edge uniformly \cite{ErdRen59}. As one needs to avoid previous edges, such a sampling scheme is not memory-free and may lead to weak dependence in some estimates.

Recently, \cite{MM11,tmm12} develops the application of HodgeRank with random graph designs in subjective Quality of Experience (QoE) evaluation and shows that random graphs could play an important role in guiding random sampling designs for crowdsourcing experiments. In particular, exploiting topology evolution of clique complexes induced from Erd\"{o}s-R\'{e}nyi random graphs \cite{Kahle09}, \cite{tmm12} shows that at least $O(n\log n)$ distinct random edges are necessary to ensure the inference of a global ranking and $O(n^{3/2})$ distinct random edges are sufficient to remove the global inconsistency.

On the other hand, there are \emph{active sampling schemes} which are designed to maximize the information in the collected dataset, potentially reducing the amount of data collected. Recently, \cite{osting2013enhanced,Osting:2012b} exploits a greedy sampling method to maximize the Fisher information in HodgeRank, which is equivalent to maximizing the smallest nonzero eigenvalue of the unnormalized graph Laplacian (a.k.a. \emph{Fiedler value} or algebraic connectivity ). Although the computational cost of such greedy sampling is prohibitive for large graphs, it effectively boosts the algebraic connectivity compared to Erd\"{o}s-R\'{e}nyi random graphs.

However, active sampling for data acquisition is not always feasible. For example, when data is collected from the Internet crowd or purchasing preferences, data collection is in general passive and independent. An important benefit of random sampling over active methods is that data collection can be trivially parallelized: comparisons can be collected from independent or weakly dependent processes, each selected from a pre-assigned block of object pairs. From this viewpoint, the simplicity of random sampling allows flexibility and applicability  to diverse situations, such as online or distributed ranking,  often desirable for crowdsourcing scenarios.

Therefore, our interest in this paper is to investigate the characteristics of these three sampling methods (\emph{i.e.}, random sampling with/without replacement and greedy sampling) for HodgeRank and identify  an attractive sampling strategy that is particularly suitable for crowdsourcing experiments. The natural questions we are trying to address are: (i) which sampling scheme is the best, \emph{e.g.}, contains the most information for HodgeRank? and
(ii) how do random and greedy sampling schemes compare  in practice?

We approach these problems with a combination of theory and experiment  in this paper. Performance of these sampling schemes is evaluated via the stability of HodgeRank, as measured by the Fiedler value.  The  Erd\"os-R\'enyi random graph model is associated with random sampling without replacement. For this model,  an estimate of the Fiedler value was recently given in \cite{Kolokolnikov2013}. The proof of this estimate  hinges on an estimate of \cite{FeigeOfek05}, which can be shown to  imply that, at first order, the Fiedler value is the minimal degree of the graph. The minimal degree of the graph can then be estimated from the binomial distribution. To analyze the random graph model associated with random sampling with replacement, we generalize the result given in  \cite{FeigeOfek05} to multigraphs. A simple Normal approximation is then used to estimate the Fiedler value. As the graphs become increasingly dense, we prove that  both random sampling methods asymptotically have  the same  Fiedler value. Our  analysis implies:
\begin{enumerate}
\item[i)] For a finite graph which is sparse, random sampling \emph{with} and \emph{without} replacement have similar performance; for a dense finite graph, random sampling \emph{without} replacement is superior to random sampling \emph{with} replacement, and approaches the performance of greedy sampling.
\item[ii)] For very large graphs, the three considered sampling schemes exhibit similar performance.
\end{enumerate}
In particular, the asymptotic behavior of the two random sampling schemes is rigorously proved in Theorem \ref{tm} and their discrepancy for small sample sizes is supported by  heuristic estimates (see Sections \ref{sec:HeurEstMinDegree} and \ref{sec:AsymAnalFiedler}). 
These analytic conclusions and the performance of the greedy sampling strategy are supported by both simulated examples and real-world datasets (see Section \ref{sec:experiments}).  Based on our findings, for a relatively small number of items to be compared, we recommend a two-stage sampling strategy where a greedy sampling method is used initially and random sampling without replacement is used in the second stage.
When a large number of items is to be compared, we recommend random sampling with replacement as this is computationally inexpensive and trivially parallelizable.

\bigskip

\textbf{Outline}. Section \ref{sec:relatedwork}
contains a review of related work. Then we establish some theoretical results for random sampling methods in Section \ref{sec:framework}. Proofs will be collected in  \ref{proof}. The results of detailed experiments on crowdsourced data are reported in Section \ref{sec:experiments}. We conclude in Section \ref{sec:conclusion} with some remarks and a discussion of future work.

\section{Related Work}\label{sec:relatedwork}

\subsection{Crowdsourcing}

The term ``crowdsourcing" is a portmanteau of ``crowd" and ``outsourcing". It is
distinguished from outsourcing in that the work comes from an undefined
public rather than being commissioned from a specific, named group. The benefits of crowdsourcing include  time-efficiency and low monetary costs. Among various crowdsourcing platforms, Amazon's Mechanical Turk (\href{https://www.mturk.com}{MTurk}) is probably the most popular and provides a marketplace for a variety of tasks; anyone seeking help from the Internet crowd can post their task requests to the website. 
Another platform, \href{http://www.innocentive.com/}{Innocentive},  enables organizations to engage diverse innovation communities such as employees, partners, or customers to rapidly generate novel ideas and innovative solutions to challenging research and development
problems. \href{http://crowdflower.com/}{CrowdFlower}'s expertise is in harnessing the Internet crowd to provide a wide range of enterprise solutions, taking complicated projects and dividing  them  into smaller, simpler tasks, which are then completed by individual contributors.
\href{http://www.crowdrank.net/}{CrowdRank} is an innovative platform that draws on the over 3 million community votes already cast to bring the crowdsourcing revolution to rankings via a novel pairwise ranking methodology that avoids the tedium of asking community members to rank every item in a category. In addition, \href{http://www.allourideas.org/}{Allourideas} provides  a free and open-source website that allows groups all over the world to create and use pairwise wiki surveys. Respondents can either participate in a pairwise wiki survey or add new
items that are then presented to future respondents.

With the help of these platforms, requesters post tasks ({\it e.g.} image annotation \cite{mm34,MIR10},
document relevance \cite{mm2},
document evaluation \cite{mm25},
music emotion recognition \cite{MM13workshop1},
affection mining in computer games \cite{MM13workshop2}, 
and quality of experience evaluation \cite{MM09,tmm12}) 
and users are compensated in the form of micro-payments
for completing these posted tasks. 
Several studies have been conducted to evaluate  the quality of completed tasks obtained from crowdsourcing approaches. For example,  researchers have investigated the reliability of non-experts  and found that a single expert in the majority of cases is more reliable than a non-expert. However, using an aggregate of several, cheap non-expert judgements could approximate the performance of expensive expertise \cite{non-expert1,non-expert2}. From this point of view, conducting subjective tests in a crowdsourcing context is a reasonable strategy.

\subsection{Pairwise ranking aggregation}
The problem of ranking or rating with paired comparison data has been widely studied in a variety of fields including decision science \cite{yao35},
machine learning \cite{yao20}, social choice \cite{yao2}, and
statistics \cite{yao26}. Various methods have been studied for this problem, which, among others,  includes 
maximumlikelihood under a Bradley-Terry model, 
rank centrality (PageRank/MC3) \cite{negahban2012iterative,dwork2001rank}, 
HodgeRank \cite{Hodge},
and a pairwise variant of the Borda count \cite{de1781memoire, Hodge}. 
If we consider the setting where pairwise comparisons are drawn
I.I.D. from some fixed but unknown probability distribution, under a ``time-reversibility" condition, the rank centrality (PageRank) and
HodgeRank algorithms both converge to an optimal ranking \cite{ICML14}.  However, PageRank is only able to aggregate the pairwise comparisons
into a global ranking over the items. HodgeRank not only provides a means to determine a global ranking from paired comparison data under
various statistical models (e.g., Uniform, Thurstone-Mosteller, Bradley-Terry, and Angular Transform),
but also measures the inconsistency of the global ranking obtained. In particular, it takes a graph theoretic view, which maps paired comparison data to edge
flows on a graph, possibly imbalanced (where different pairs
may receive different number of comparisons) and incomplete
(where every participant may only provide partial comparisons),
and then applies combinatorial Hodge Theory to
achieve an orthogonal decomposition of such edge flows into
three components: a gradient flow representing the global rating (optimal
in the $L_2$-norm sense), a triangular curl flow representing  local inconsistency,
and a harmonic flow representing global inconsistency. In this paper, we will analyze two random sampling methods based on the HodgeRank estimate.

\subsection{Active sampling}

The fundamental notion of active sampling has a long history in machine learning. To our knowledge, the first to discuss it explicitly were \cite{1974} and \cite{1975}.
Subsequently, the term active learning was coined \cite{1994} and has
been shown to benefit a number of multimedia applications
such as object categorization \cite{cvpr20}, image retrieval \cite{cvpr16,cvpr37}, video classification \cite{cvpr36}, dataset annotation \cite{cvpr8}, and interactive co-segementation \cite{cvpr4}, maximizing
the knowledge gain while valuing the user effort \cite{cvpr35}.

Recently, several authors have studied the active sampling problems for ranking and rating, with the goal of reducing the amount of data that must be collected.
For example, \cite{jamieson2011active} considers the case when the true scoring function reflects the Euclidean distance of object covariates from a global reference point. If objects are embedded in $\mathbb R^d$ or the scoring function is linear in such a space, the active sampling complexity can be reduced to $O(d \log n)$, as demonstrated through a comparison of beer  \cite{jamiesonactive}.
Moreover, \cite{ailon2012active} discusses the application of a polynomial time approximate solution (PTAS) for the NP-hard minimum feedback arc-set (MFAST) problem, in active ranking with sample complexity $O(n \cdot {\rm{poly}}(\log n, 1/\varepsilon))$ to achieve $\varepsilon$-optimum. 
The works mentioned above can be treated as ``learning to rank" which requires a vector representation of the items to be ranked, thus can not be directly applied to crowdsourced ranking. In the crowdsourcing scenario, the explicit feature representation
of items is unavailable and the goal becomes to learn a single ranking
function from the ranked items using a smaller number of samples selected actively \cite{negahban2012iterative}. In \cite{chen2013pairwise}, a Bayesian framework is proposed to actively select pairwise comparison queries based on Bradley-Terry models. Furthermore, \cite{Chen2015SDM} addresses the problem of budget allocation in crowd labeling using the Bayesian Markov decision process and characterizing the optimal policy using the dynamic programming.
Most recently, \cite{osting2013enhanced,Osting:2012b} approaches active sampling from a
statistical perspective of maximizing the Fisher information, which they show to be equivalent to maximizing the Fiedler value of the graph (smallest nonzero eigenvalue of the graph Laplacian which arises in HodgeRank), subject to an integer weight constraint. In this paper, we shall focus on analyzing the Fiedler value of graphs generated based on random sampling schemes.

\section{Analysis of sampling methods}\label{sec:framework}
Statistical preference aggregation or ranking/rating from pairwise comparison data is a classical problem, which can be traced back to the $18^{\text th}$ century with  discussions on voting and social choice. This subject area has recently undergone  rapid growth in various applications due to the availability of the Internet and development of crowdsourcing techniques. In these scenarios, typically we are given pairwise comparison data on a graph $G = (V,E)$, $Y^\alpha \colon E\rightarrow \mathbb{R}$ such that $Y_{ij}^\alpha=-Y_{ji}^\alpha$ where $\alpha$ is the index for multiple comparisons, $Y_{ij}^\alpha>0$ if it prefers $i$ to $j$ and $Y_{ij}^{\alpha}\leq 0$ otherwise. In the dichotomous choice, $Y_{ij}^\alpha$ can be taken as $\{\pm 1\}$, while multiple choices are also widely used  (\emph{e.g.}, $k$-point Likert scale, $k=3,4,5$).

The general purpose of preference aggregation is to look for a global score $x\colon V\to \R$ such that
\begin{equation} \label{eq:ho_rank0}
\min_{\substack{x\in {\mathbb{R}}^{|V|} \\ x\bot1 }} \sum_{i,j,\alpha} \omega_{ij}^\alpha \L(x_i - x_j, Y_{ij}^\alpha),
\end{equation}
where $\L(x,y)\colon \R\times \R\to \R$ is a loss function, $\omega_{ij}^\alpha$ denotes the confidence weight of this comparison which is set to be 1 in this paper but other choices are also possible, and $x_i$ ($x_j$) represents the global ranking score of item \emph{i} (\emph{j}, respectively).  For a connected graph $G$, restricting to the subspace $\{x \in \R^{|V|} \colon x\bot 1\}$ guarantees a unique solution in \eqref{eq:ho_rank0}. For example, $\L(x,y)= (\sign(x) - y)^2$ leads to the minimum feedback arc-set (MFAST) problem which is NP-hard, where \cite{ailon2012active} proposes an active sampling scheme whose complexity is $O(n \cdot {\rm{poly}}(\log n, 1/\varepsilon))$ to achieve $\varepsilon$-optimum. In HodgeRank, one benefits from the use of square loss $\L(x,y)=(x-y)^2$ which leads to fast algorithms to find optimal global ranking $x$, as well as an orthogonal decomposition of the least square residue into local and global inconsistencies \cite{Hodge}.

To see this, let $\hat{Y}_{ij}=( \sum_\alpha \omega_{ij}^\alpha Y^\alpha_{ij}) / (\sum_\alpha \omega_{ij}^\alpha )$ ($\omega_{ij} = \sum_\alpha \omega^\alpha_{ij}$) be the mean pairwise comparison scores on $(i,j)$, which can be extended to a family of generalized statistical linear models. To characterize the solution and residual  of (\ref{eq:ho_rank0}), we first define a 3-clique complex $X_G=(V,E,T)$ where $T$ collects all triangular complete subgraphs in $G$:
$$
T=\left\{\{i,j,k\} \in
\begin{pmatrix} V\\3 \end {pmatrix}
\colon \{i,j\},\{j,k\},\{k,i\}\in  E\right\}.
$$
Then every $\hat{Y}$ admits an orthogonal decomposition:
\begin{equation} \label{hodge}
\hat{Y} = \hat{Y}^{g} + \hat{Y}^{h} + \hat{Y}^{c},
\end{equation}
where the gradient flow $\hat{Y}^{g}$ satisfies
\begin{equation} \label{eq:gradient-flow}
\hat{Y}^{g}_{ij}  = x_i - x_j,\ \ \textrm{for some}\ x\in {\mathbb R}^n ,
\end{equation}
the harmonic flow $\hat{Y}^{h}$ satisfies
\begin{equation} \label{eq:curl-free}
\hat{Y}^{h}_{ij} + \hat{Y}^{h}_{jk} + \hat{Y}^{h}_{ki} = 0, \ \textrm{for each $\{i,j,k\}\in T$} ,
\end{equation}
\begin{equation} \label{eq:divergence-free}
\sum_{{j: (i,j)\in E}} \omega_{ij}\hat{Y}^{h}_{ij} = 0, \ \textrm{for each $i\in V$},
\end{equation}
and the curl flow $\hat{Y}^{c}$  satisfies (\ref{eq:divergence-free}) but not (\ref{eq:curl-free}).

The residuals $\hat{Y}^{c}$ and $\hat{Y}^{h}$ indicate  whether inconsistencies in the ranking data arises locally or globally. Local inconsistency can be fully characterized by triangular cycles (\emph{e.g}. $i \succ j \succ k\succ i $), while global inconsistency generically involves longer cycles of inconsistency in $V$ (\emph{e.g}. $i \succ j \succ k \succ \cdots \succ i$), which may arise due to data incompleteness and cause the fixed tournament issue. Random sampling to avoid global inconsistency, generally requires  at least $O(n^{3/2})$ random samples without replacement \cite{MM11,tmm12}.

The global rating score $x$ in (\ref{eq:gradient-flow}) can be obtained  by solving the normal equation \cite{Hodge},
\begin{equation}
\label{eq:HodgeRank}
Lx = b.
\end{equation}
Here, $L=D-A$ is the unnormalized graph Laplacian, where $A(i,j)=\omega_{ij}$ if $(i,j)\in E$, $A(i,j)=0$ otherwise, and $D$ is a diagonal matrix with $D(i,i)=\sum_{{j: (i,j)\in E}} \omega_{ij}$, as well as $b= \dive (\hat{Y})$ is the divergence flow defined by $b_i = \sum_{{{j: (i,j)\in E}}}\omega_{ij} \hat{Y}_{ij}$. There is an extensive literature in linear algebra on solving the symmetric Laplacian equation. However, all methods are subject to the intrinsic stability of HodgeRank,  characterized in the following subsection.

\subsection{Stability of HodgeRank}
The following classical result  (see, {\it e.g.}, \cite{MatrixCompIII}) gives a measure of the sensitivity of the global ranking score $x$ against perturbations on $L$ and $b$. Given the parameterized system
\[(L+\epsilon F)x(\epsilon) = b + \epsilon f, ~~~~ x(0) = x\]
where $F\in R^{n\times n}$ and $f \in R^n$,  then
\[\frac{\|x(\epsilon) - x\|}{\|x\|} \le |\epsilon|\| \ L^{-1}\| \left(\frac{\|f\|}{\|x\|}+\|F\| \right)+O(\epsilon^2).\]
 Here and throughout this paper, the matrix norm is the spectral norm and the vector norm is the Euclidean norm. In crowdsourcing, the matrix $L$ is determined by the sampled pairs, and can be regard as fixed when given the pairwise data. However,  $b=\dive(\hat{Y})$ is random because of  noise possibly  induced by crowdsourcing. So there is no perturbation on $L$, \emph{i.e.}, $F=0$, and we obtain
\begin{equation}
\label{eq:sensitivity}
\frac{ \|x(\epsilon) - x\|}{\|f\|} \le  \|L^{-1}\|  \ |\epsilon|+O(\epsilon^2).
\end{equation}
If the graph representing the pairwise comparison data is connected, the graph Laplacian, $L$, has a one-dimensional kernel spanned by the constant vector. In this case, the solution to \eqref{eq:HodgeRank} is understood in the minimal norm least-squares sense, \emph{i.e.} $\hat{x} = L^\dag b$ where $L^\dag$ is the Moore-Penrose pseudoinverse of $L$. Hence \eqref{eq:sensitivity} implies that the sensitivity of the estimator is controlled by $\| L^\dag \| = [ \lambda_2(L) ]^{-1}$, the reciprocal of the second smallest eigenvalue of the graph Laplacian. $\lambda_2(L)$ is also referred to as the \emph{Fiedler value} or \emph{algebraic connectivity} of the graph. It follows that collecting pairwise comparison data so that $\lambda_2(L)$ is large provides an estimator which is insensitive to noise in the pairwise comparison data, $\hat{Y}$.

\textbf{Remark.} \cite{osting2013enhanced,Osting:2012b} show that for a fixed variance statistical error model, the Fisher information matrix of the HodgeRank estimator \eqref{eq:HodgeRank} is proportional to the graph Laplacian, $L$. Thus, finding a graph with large algebraic connectivity, $\lambda_2(L)$, can be  equivalently viewed in the context of optimal experimental design as maximizing the  ``E-criterion'' of the Fisher information.

\subsection{Random sampling schemes}
 In what follows, we study two random sampling schemes:
\begin{enumerate}
\item $G_0(n,m)$: \emph{Uniform sampling with replacement}. Each edge is sampled from the uniform distribution on $\binom{n}{2}$ edges, with replacement. This is a weighted graph and the sum of weights is $m$.
\item $G(n,m)$: \emph{Uniform sampling without replacement}. Each edge is sampled from the uniform distribution on the available edges without replacement. For $m\leq \binom{n}{2}$, this is an instance of the Erd\"{o}s-R\'{e}nyi random graph model $G(n,p)$ with $p=m/\binom{n}{2}$.
\end{enumerate}
Motivated by the estimate given in \eqref{eq:sensitivity}, we will characterize the behavior of the Fiedler value of the graph Laplacians associated with these sampling methods. It is well-known that the Erdos-Renyi random graph $G(n,p)$ is connected with high probability if the sampling rate is at least $p=(1+\epsilon) \log n/n$ \cite{ErdRen59}. Therefore we use  the parameter $p_0:=2m/ ((n-1) \log n)\geq 1$ (where {$m=n(n-1) p/2\approx  n^2 p/2$}), the degree above the connectivity threshold, to compare the efficiency in boosting Fiedler values for different sampling methods.

As a comparison for random sampling schemes, we consider a \emph{greedy sampling} method of sampling pairwise comparisons to maximize the algebraic connectivity of the graph \cite{Ghosh:2006b,osting2013enhanced,Osting:2012b}.
The problem of finding a set of $m$ edges on $n$ vertices with maximal algebraic connectivity is an NP-hard problem. The following greedy heuristic, based on the Fiedler vector, $\psi$, can be used. The Fiedler vector is the eigenfunction of the graph Laplacian corresponding to the Fiedler value. We shall denote the graph with $m$ edges on  $n$ vertices by $G_\star(n,m)$.  The graph is built iteratively, at each iteration, the Fiedler vector is computed and  the edge $(i,j)$ which maximizes $(\psi_{i} - \psi_{j})^{2}$ is added to the graph. The iterates are repeated until a graph of the desired sized is obtained.

\subsection{Fiedler value and minimal degree}
The key to evaluating the Fiedler value of random graphs is via the graph minimal degree, $d_{\text{min}}$. This is due to the definition of graph Laplacian,
$$L=D-A, $$
whose diagonal $D(i,i) \simeq O(m/n)$ dominates as $\max_{\|v\|=1} v^T A v \simeq O(\sqrt{m/n})$. The following Lemma makes this observation precise, which is used by \cite{Kolokolnikov2013} in the study of Erd\"{o}s-R\'{e}nyi random graphs.
\begin{lem}\label{lm1} Consider the random graph $G_0(n,m)$ (or $G(n,m)$) and let $\lambda_2$ be the Fiedler value of the graph. Suppose there exists a $p_0 > 1$ so that $2m\ge p_0n\log n$ and $C, c_1> 0$ so that
\[|d_{\text{min}}-c_1\frac{2m}{n}|\le C\sqrt{\frac{2m}{n}}\]
with probability at least $1-O(e^{-\Omega(\sqrt{\frac{2m}{n}})})$. Then there exists a $\tilde{C} > 0$ so that
\[|\lambda_2-c_1\frac{2m}{n}|\le \tilde{C}\sqrt{\frac{2m}{n}}. \]
\end{lem}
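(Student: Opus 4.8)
The plan is to sandwich $\lambda_2$ between the minimal degree $d_{\text{min}}$ and $d_{\text{min}}$ plus lower-order corrections, and then substitute the assumed concentration of $d_{\text{min}}$. Writing $L = D - A$ and using the variational characterization $\lambda_2 = \min\{ v^T L v : v \perp \mathbf{1},\ \|v\|=1\} = \min\{ v^T D v - v^T A v : v \perp \mathbf{1},\ \|v\|=1\}$, the two halves of the sandwich come from controlling the diagonal term $v^T D v$, which is pinned near $d_{\text{min}}$, against the adjacency term $v^T A v$ restricted to $\mathbf{1}^\perp$, which I claim is only of order $\sqrt{2m/n}$.

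For the upper bound I would test the Rayleigh quotient against $v = e_{v_0} - \frac{1}{n}\mathbf{1}$, where $v_0$ is a vertex of minimal degree and $e_{v_0}$ the corresponding standard basis vector. This $v$ is orthogonal to $\mathbf{1}$ with $\|v\|^2 = (n-1)/n$, and since $L\mathbf{1}=0$ we obtain $v^T L v = e_{v_0}^T L e_{v_0} = d_{\text{min}}$, giving the classical Fiedler bound $\lambda_2 \le \frac{n}{n-1} d_{\text{min}}$. The hypothesis yields $d_{\text{min}} \le c_1 \frac{2m}{n} + C\sqrt{2m/n}$, while the overhead $\frac{1}{n-1} d_{\text{min}} = O(m/n^2)$ is $o(\sqrt{2m/n})$ because $m \le \binom{n}{2}$; together these give $\lambda_2 \le c_1 \frac{2m}{n} + \tilde C \sqrt{2m/n}$.

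For the lower bound I would use $v^T D v \ge d_{\text{min}}\|v\|^2 = d_{\text{min}}$ for every unit $v$, so that $\lambda_2 \ge d_{\text{min}} - \max\{ v^T A v : v \perp \mathbf{1},\ \|v\|=1\}$. The crux is to bound this last maximum by $O(\sqrt{2m/n})$. Setting $\bar A = \E A = p(J - I)$ with $J = \mathbf{1}\mathbf{1}^T$ and $p = 2m/(n(n-1))$, any unit $v \perp \mathbf{1}$ satisfies $v^T \bar A v = p((\mathbf{1}^T v)^2 - \|v\|^2) = -p$, so $v^T A v = -p + v^T(A - \bar A) v \le -p + \|A - \bar A\|$. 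This is exactly where the estimate of \cite{FeigeOfek05}, or its multigraph generalization, enters: in the regime $2m \ge p_0 n \log n$ (so that $np \approx 2m/n \gtrsim p_0 \log n \ge \log n$) it gives $\|A - \bar A\| = O(\sqrt{np}) = O(\sqrt{2m/n})$ with high probability. Combined with $d_{\text{min}} \ge c_1 \frac{2m}{n} - C\sqrt{2m/n}$ this produces the matching lower bound.

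Finally I would work on the intersection of the two good events — the assumed concentration of $d_{\text{min}}$ and the spectral bound on $\|A - \bar A\|$ — on which both inequalities hold simultaneously, and absorb all constants into a single $\tilde C$. The main obstacle is precisely the spectral estimate $\|A - \bar A\| = O(\sqrt{2m/n})$: it is the only nontrivial probabilistic input, and it requires that no atypically high-degree vertex inflate the norm, which is why $p_0 > 1$ is imposed so the graph sits above the connectivity threshold. Moreover, for the sampling-with-replacement model $G_0(n,m)$ the matrix $A$ is an integer-weighted multigraph adjacency, so the Feige--Ofek bound cannot be cited verbatim and must be extended to this setting.
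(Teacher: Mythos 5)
Your proposal is correct and follows essentially the same route as the paper: the upper bound is the classical Fiedler inequality $\lambda_2 \le \frac{n}{n-1}d_{\text{min}}$, the lower bound is $\lambda_2 \ge d_{\text{min}} - \max\{v^TAv : v\perp\mathbf{1},\ \|v\|=1\}$, and the key probabilistic input is the Feige--Ofek spectral estimate, which the paper isolates as Lemma~\ref{lm2} and (as you correctly note) must be extended to the multigraph setting for $G_0(n,m)$. The only cosmetic difference is that you bound the adjacency term by centering, $v^TAv \le -p + \|A-\E A\|$, whereas the paper's Lemma~\ref{lm2} bounds $\max_{v\perp\mathbf{1}} v^TAv$ directly; these are equivalent formulations of the same estimate.
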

Lemma \ref{lm1} implies that the difference between $\lambda_2$ and $d_{\text{min}}$ (i.e., minimal degree) is  small, so the  Fiedler value for both random graphs can be approximated by their minimal degrees.

The proof for $G(n,m)$ follows from \cite{Kolokolnikov2013},  which establishes the result for the Erd\"{o}s-R\'{e}nyi random model $G(n,p)$. The proof  for  $G_0(n,m)$, needs the following lemma.

\begin{lem}\label{lm2}
Let $A$ denote the adjacency matrix of a random graph from $G_0(n,m)$ and $S = \{v\bot1\colon \|v\|=1\}$. There exists a constant $c>0$, such that if $m>n\log n/2$, the estimate
\[ \max_{v\in S}  v^TAv \ \le \  c\sqrt{2m/n} \]
holds with probability at least $1-O(1/n).$
\end{lem}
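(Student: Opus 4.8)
The plan is to bound the quadratic form $v^T A v$ over the unit sphere orthogonal to $\mathbf{1}$, where $A$ is the (weighted) adjacency matrix of the multigraph $G_0(n,m)$. The reference result of \cite{FeigeOfek05} establishes exactly this kind of bound for the Erd\"os-R\'enyi graph $G(n,p)$, so the core task is to adapt that proof to the \emph{with-replacement} setting, where edges carry random integer multiplicities summing to $m$ rather than being simple $0/1$ entries. I would first set up the probabilistic model precisely: with $m$ i.i.d.\ draws from the uniform distribution on the $\binom{n}{2}$ pairs, the weight $A_{ij}$ is a binomial (nearly Poisson) random variable with mean $p := m/\binom{n}{2} \approx 2m/n^2$, and the $A_{ij}$ are weakly negatively correlated through the multinomial constraint. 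The target scale $\sqrt{2m/n} = \sqrt{np}\cdot\sqrt{1/(1-1/n)}$ is precisely the $\sqrt{np}$ order that appears in \cite{FeigeOfek05}.

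**The strategy of \cite{FeigeOfek05}** is a discretization-plus-separation argument that I would mirror. One reduces the continuous supremum over the sphere to a maximum over a finite $\epsilon$-net, or equivalently (as in their approach) over vectors with entries on a discretized grid, paying only a constant factor. For a fixed pair $(u,v)$ of such vectors one bounds $\sum_{i,j} A_{ij} u_i v_j$ by splitting the index pairs into a ``light'' region (where $|u_i v_j|$ is small) and a ``heavy'' region, controlling the light part by a Bernstein/Chernoff concentration inequality and the heavy part by a combinatorial counting of discrepancy, then taking a union bound over the net. The union bound succeeds because the net has size $e^{O(n)}$ while the concentration gives failure probability $e^{-\Omega(n\log n)}$ or better per vector once $m > n\log n/2$, leaving room to land at the stated $1 - O(1/n)$.

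**The main obstacle** will be re-deriving the concentration estimate for the \emph{weighted} entries. In \cite{FeigeOfek05} each $A_{ij}\in\{0,1\}$, so the contribution of any single entry to $v^T A v$ is bounded by $|v_i v_j|$; here $A_{ij}$ can be as large as $m$, so I must control the tail of the multiplicities. The clean route is to observe that the multinomial counts are stochastically dominated by, or closely coupled to, independent Poisson$(p)$ variables, for which one has sharp exponential moment bounds $\E e^{t(A_{ij}-p)} \le \exp(p(e^t - 1 - t))$. Feeding these into a Bernstein-type inequality for $\sum A_{ij} u_i v_j$ yields a subgaussian regime at scale $\sqrt{np}$ exactly when $np \ge \log n$, matching the hypothesis $m > n\log n/2$. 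The heavy-pair/discrepancy counting then needs the analogous statement that no small vertex subset $S$ accumulates weight much larger than $p|S|^2$, which again follows from the Poisson tail plus a union bound over the $2^n$ choices of $S$; the multiplicities being unbounded is harmless because a deviation of order $m$ in any fixed entry is itself an event of probability $e^{-\Omega(m)}$, negligible at the relevant scale.

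**Finally** I would assemble the pieces: the net reduction contributes the constant $c$, the Bernstein bound on light pairs and the discrepancy bound on heavy pairs each hold off a $1-O(1/n)$ event after the union bounds, and the orthogonality $v\perp\mathbf{1}$ is used to subtract the rank-one mean part $p\,\mathbf{1}\mathbf{1}^T$ (which $v^T A v$ annihilates) so that only the centered fluctuation $A - p(\mathbf{1}\mathbf{1}^T - I)$ contributes, whose spectral norm on $\mathbf{1}^\perp$ is the quantity we have just bounded by $c\sqrt{2m/n}$. This gives the claimed estimate and, combined with Lemma~\ref{lm1}, transfers the minimal-degree concentration to the Fiedler value for the with-replacement model.
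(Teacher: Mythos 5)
Your proposal follows the same overall route as the paper's proof: both adapt the Feige--Ofek argument to $G_0(n,m)$ via a discrete grid on $\{v\perp\mathbf{1}:\|v\|\le 1\}$, a split into light and heavy couples at threshold $\sqrt{d}/n$ (with $d=2m/n$), Bernstein concentration plus a union bound over the net for the light part, and bounded-degree/discrepancy counting for the heavy part, with the final $1-O(1/n)$ probability coming from the degree and discrepancy events. The one substantive difference is how the with-replacement multiplicities are handled. The paper never manipulates the entries $A_{ij}$ as random variables at all: it writes $A=\sum_{k=1}^m A_k$ over the $m$ i.i.d.\ edge draws, so that $u^TAv=\sum_k L_k+\sum_k H_k$ is a sum of $m$ i.i.d.\ terms, each bounded by $2\sqrt{d}/n$ after the light truncation; classical Bernstein then applies verbatim, and the heavy part reduces to honest binomial counts $e(A,B)\sim B\bigl(m,|A||B|/\binom{n}{2}\bigr)$ handled by Chernoff--Hoeffding. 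Your entry-wise route can be made rigorous, but two of your specific claims need repair.

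First, $B\bigl(m,1/\binom{n}{2}\bigr)$ is \emph{not} stochastically dominated by a Poisson of the same mean: since $(1-q)^m\le e^{-mq}$, the binomial is \emph{more} likely to be $\ge 1$, so domination in the usual order fails. What is true --- and what you also write --- is domination of exponential moments, and it holds jointly: for the multinomial vector, $\E\exp\bigl(\sum_{ij}t_{ij}A_{ij}\bigr)=\bigl(1+\sum_{ij}q(e^{t_{ij}}-1)\bigr)^m\le\prod_{ij}\exp\bigl(mq(e^{t_{ij}}-1)\bigr)$; note that proving this passes through exactly the i.i.d.-sample structure the paper exploits directly, so the per-sample decomposition is the cleaner formalization of your idea. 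Second, the discrepancy property cannot take the uniform form ``no set $S$ carries weight much more than $p|S|^2$'': for small sets $p|S|^2\ll 1$, yet a single sampled edge inside $S$ already violates it. You need the two-case Feige--Ofek version (either $\lambda(A,B)\le 4$, or $e(A,B)\log\lambda(A,B)=O(|B|\log(n/|B|))$), which is what the paper verifies. A minor point: the light-couple failure probability is $e^{-\Theta(cn)}$, beaten against the net size $e^{c_\epsilon n}$ by taking the constant $c$ large, not $e^{-\Omega(n\log n)}$. None of these issues sinks your plan, but as stated they are gaps; the paper's i.i.d.\ decomposition over samples avoids all of them at once.
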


With the aid of this lemma, one can estimate the Fiedler value by the minimal degree of $G_0(n,m)$. In fact,
\begin{eqnarray*}
\lambda_2(L) &=& \min_{v\in S} \langle v,Lv\rangle\\ \nonumber
 &=&\min_{v\in S} \langle v,Dv\rangle - \langle v,Av\rangle\\\nonumber
 &\ge& d_{\text{min}} - \max_{v\in S} \langle v,Av\rangle.   \nonumber
 \end{eqnarray*}
 Also Cheeger's inequality tells that $\lambda_2(L) \le \frac{n}{n-1}d_{\text{min}}$. These bounds show the validity of Lemma \ref{lm1}. The proof of Lemma \ref{lm2} is given in \ref{proof}.


\subsection{A heuristic estimate of the minimal degree} \label{sec:HeurEstMinDegree}
In this section, we estimate  the minimal degree. First, consider the Erd\"{o}s-R\'{e}nyi random graph model $G(n,p)$ with $p=m/\binom{n}{2}$. Then $d_i \sim B(n,p)$, so $\frac{d_i-np}{\sqrt{np(1-p)}}\sim N(0,1)$. The degrees are \emph{weakly dependent}. If the degrees were \emph{independent}, the following concentration inequality for Gaussian random variables,
\[ \Prob(\max_{1\leq i \leq n} |X_i|>t) \leq n \exp\left(-\frac{t^2}{2}\right), \quad  X\sim N(0,I_n), \]
 would imply that the minimal value of $n$ copies of $N(0,1)$ is about $-\sqrt{2\log n}$. In this case,
\[d_{\text{min}} \approx np - \sqrt{2\log (n) np(1-p)},\]
implying that
\[\frac{d_{\text{min}}}{np} \approx 1 - \sqrt{\frac{2\log n}{np}}\sqrt{1-p}.\]

A similar approximation can be employed  for $G_0(n,m)$. Here,
$d_i \sim B(m,2/n)$, so $\frac{d_i-np}{\sqrt{np(1-2/n)}}\sim N(0,1)$. Again, the $d_i$ are only weakly  dependent, so
\[d_{\text{min}} \approx np - \sqrt{2\log ( n ) np(1-2/n)},\]
 which implies that
\[ \frac{d_{\text{min}}}{np} \approx 1 - \sqrt{\frac{2\log n}{np}}\sqrt{1-2/n}. \]
Collecting these results and using $\frac{d_{min}}{np}\simeq \frac{\lambda_2}{np}$, we have the following estimates.

{\bf Key Estimates}.
\begin{align}
\label{eq:G0lam}
G_0(n,m)\colon \ & \frac{\lambda_2}{np} \approx a_1(p_0,n) := 1 - \sqrt{\frac{2}{p_0}}\sqrt{1-\frac{2}{n}} \\
\label{eq:Glam}
G(n,m)\colon \ &  \frac{\lambda_2}{np} \approx a_2(p_0,n):= 1 - \sqrt{\frac{2}{p_0}}\sqrt{1-p}
\end{align}
where $p=\frac{p_0 \log n}{n}$.

\textbf{Remark.} As $n\to\infty$, both \eqref{eq:G0lam} and \eqref{eq:Glam} become  $\frac{\lambda_2}{np} \approx  1 - \sqrt{\frac{2}{p_0}}$. But for finite $n$ and dense $p$, $G(n,m)$ may have larger Fiedler value than $G_0(n,m)$.

The above reasoning  (falsely) assumes independence of $d_i$, which is only valid as $n\to \infty$. In the following section, we make this precise with an asymptotic estimate of the Fiedler value in the two random sampling schemes.

\subsection{Asymptotic analysis of the Fiedler value} \label{sec:AsymAnalFiedler}
In the last section, we gave a heuristic estimator of the Fiedler value. The following theorem gives an asymptotic estimate of the Fiedler value as $n\to \infty$.

\begin{thm}\label{tm}
Consider a random graph $G_0(n,m)$ (or $G(n,m)$) on n vertices corresponding to uniform sampling with (without) replacement and $m = p_0n\log (n)/2 $. Let $\lambda_2$ be the Fiedler value of the graph. Then
\begin{equation}\label{eq:lam2bign}
\frac{\lambda_2}{2m/n} \simeq a(p_0) + O(\frac{1}{\sqrt{2m/n}}),
\end{equation}
with high probability, where $a(p_0)\in(0,1)$ denotes the solution to
\[p_0-1 = ap_0(1-\log a). \]
\end{thm}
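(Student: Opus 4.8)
The plan is to transfer the problem to the minimal degree $d_{\text{min}}$ via Lemma \ref{lm1} and then carry out a large-deviation analysis of $d_{\text{min}}$ in both sampling models. By Lemma \ref{lm1} it suffices to exhibit a constant $c_1 > 0$ for which $|d_{\text{min}} - c_1\,\frac{2m}{n}| \le C\sqrt{\frac{2m}{n}}$ holds with probability at least $1 - O(e^{-\Omega(\sqrt{2m/n})})$; the conclusion $|\lambda_2 - c_1\,\frac{2m}{n}| \le \tilde C\sqrt{\frac{2m}{n}}$ then follows, and dividing through by $\frac{2m}{n} = p_0\log n$ yields \eqref{eq:lam2bign} with $a(p_0) = c_1$ and error $O(1/\sqrt{2m/n})$. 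For $G(n,m)$ the applicability of Lemma \ref{lm1} is inherited from the Erd\"{o}s--R\'{e}nyi analysis of \cite{Kolokolnikov2013}, while for $G_0(n,m)$ it rests on Lemma \ref{lm2}. Thus the whole task reduces to identifying the correct $c_1$ and proving two-sided concentration of $d_{\text{min}}$.

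First I would record the degree laws. In $G(n,m)$ each degree is $\mathrm{Bin}(n-1,p)$ with $p = m/\binom{n}{2}$, and in $G_0(n,m)$ each degree is $\mathrm{Bin}(m,2/n)$; in both cases the common mean is $\mu := \frac{2m}{n} = p_0\log n$, and since the success probability is $O(\log n/n)\to 0$, each degree is closely approximated by a $\mathrm{Poisson}(\mu)$ law. The location of $d_{\text{min}}$ is dictated by the lower tail $\Prob(d_i \le a\mu)$. A Chernoff bound, together with the fact that the binomial rate exceeds the Poisson rate $I(a) = 1 - a + a\log a$ by only an additive $O(\mu\cdot q) = o(1)$ correction to the exponent $\mu I(a)$ in this regime, gives $\Prob(d_i \le a\mu) = \exp(-\mu I(a) + o(\mu))$. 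The threshold is where $n\,\Prob(d_i \le a\mu) = \Theta(1)$, i.e.\ $\log n = \mu\,I(a) = p_0\log n\cdot I(a)$, forcing $p_0(1 - a + a\log a) = 1$; rearranging gives precisely $p_0 - 1 = a\,p_0(1 - \log a)$, the defining equation for $a(p_0)$. Since the map $a \mapsto p_0(1 - a + a\log a)$ decreases from $p_0$ to $0$ on $(0,1]$ (its derivative is $p_0\log a < 0$), this equation has a unique root $a(p_0)\in(0,1)$ exactly when $p_0 > 1$, so I set $c_1 = a(p_0)$.

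Next I would prove two-sided concentration of $d_{\text{min}}$ about $a(p_0)\mu$ in a window of width $C\sqrt{\mu}$. For the lower bound, that every degree exceeds $a\mu - C\sqrt\mu$, a union bound gives failure probability at most $n\,\Prob(d_i \le a\mu - C\sqrt\mu)$; Taylor expanding $I$ about $a$ (using $I'(a) = \log a < 0$ and $\mu I(a) = \log n + o(1)$) turns this into $n\exp(-\mu I(a - C/\sqrt\mu)) = \exp(-|\log a|\,C\sqrt\mu + o(\sqrt\mu)) = e^{-\Omega(\sqrt{2m/n})}$, exactly the rate required by Lemma \ref{lm1}. For the upper bound, the existence of a vertex with degree $\le a\mu + C\sqrt\mu$, I would apply the second moment method to $N = \#\{i : d_i \le a\mu + C\sqrt\mu\}$: the same expansion gives $\E[N] = e^{\Omega(\sqrt\mu)}\to\infty$, so if $\mathrm{Var}(N) = O(\E[N])$ then $\Prob(N = 0) \le \mathrm{Var}(N)/\E[N]^2 = e^{-\Omega(\sqrt\mu)}$, again matching the requirement. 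Choosing the window to be $\Theta(\sqrt\mu)$ rather than $O(1)$ is exactly what converts the concentration into the failure probability $e^{-\Omega(\sqrt{2m/n})}$ demanded by Lemma \ref{lm1}.

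The main obstacle I anticipate is justifying the near-independence of the degrees, since the heuristic of the previous subsection explicitly (and falsely) assumed it. In $G_0(n,m)$ a single sampled edge increments two degrees, so $\mathrm{Cov}(d_i,d_j) \approx -2m/n^2$, of order $\mu/n$ and hence negligible against $\mathrm{Var}(d_i)\approx\mu$; an analogous $O(1/n)$ coupling holds in $G(n,m)$. The delicate step is turning these weak pairwise correlations into the bound $\mathrm{Var}(N) = O(\E[N])$ used above, i.e.\ controlling $\mathrm{Cov}(\mathbf{1}[d_i \le k],\mathbf{1}[d_j \le k])$ uniformly over the relevant threshold $k$; this is where a Chen--Stein Poisson approximation for the count of low-degree vertices, or a careful conditional argument on the sampled edge multiset, is required, and where the with-replacement and without-replacement models must be handled separately. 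Once the variance bound is secured, the large-deviation computation fixes $c_1 = a(p_0)$, the two concentration estimates verify the hypothesis of Lemma \ref{lm1} with the required probability, and the theorem follows.
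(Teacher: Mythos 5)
Your outline reproduces the paper's own strategy almost step for step: reduce to $d_{\text{min}}$ via Lemma~\ref{lm1} (with \cite{Kolokolnikov2013} covering $G(n,m)$ and Lemma~\ref{lm2} enabling the reduction for $G_0(n,m)$), locate the threshold from $n\,\Prob(d_i\le a\mu)=\Theta(1)$ where $\mu:=2m/n=p_0\log n$ --- your rate $I(a)=1-a+a\log a$ is the paper's $-\mathcal{H}(a)$, and your equation $p_0 I(a)=1$ is exactly the paper's $p_0\mathcal{H}(a(p_0))=-1$ --- then a union bound for the lower bound on $d_{\text{min}}$ and a second-moment (Chebyshev) argument for the matching upper bound, in a window of width $\Theta(\sqrt{\mu})$ so as to hit the failure probability $O(e^{-\Omega(\sqrt{2m/n})})$ demanded by Lemma~\ref{lm1}. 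Your observation on existence and uniqueness of $a(p_0)\in(0,1)$ for $p_0>1$ is correct and is a point the paper leaves implicit. The skeleton is right; the problem is that the two quantitative inputs this skeleton needs are precisely the parts you have not supplied.

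First, your tail estimate $\Prob(d_i\le a\mu)=\exp(-\mu I(a)+o(\mu))$ is too coarse to drive the second-moment step: since your window has width $\Theta(\sqrt{\mu})$, you need the tail probability pinned down to a factor $e^{o(\sqrt{\mu})}$, and an uncontrolled $o(\mu)$ in the exponent can swallow the $\Theta(\sqrt{\mu})$ gain from the Taylor expansion of $I$, so the claim $\E[N]=e^{\Omega(\sqrt{\mu})}\to\infty$ does not follow as stated. (For the union-bound direction this is harmless, because Chernoff gives $\Prob(d_i\le a\mu)\le e^{-\mu I(a)}$ with no error term.) The paper fixes this with a reverse estimate carrying only a polynomial prefactor, $\Prob(d_i\le a\mu)\ge \frac{c}{\sqrt{\mu}}\,e^{-\mu I(a)}$, obtained from a Stirling bound on a single binomial term. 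Second, the covariance bound, which you correctly flag as the delicate step, is left at ``Chen--Stein or a careful conditional argument is required''; this is the technical core of the proof and is exactly what separates $G_0(n,m)$ from the false independence heuristic. The paper executes the conditional argument explicitly: writing $f_n=\Prob\bigl(d_i\le a^+\mu\bigr)$, it observes that conditionally on $d_2=k$ the law of $d_1$ is the convolution of $B\bigl(k,\frac{1}{n-1}\bigr)$ and $B\bigl(m-k,\frac{2}{n-1}\bigr)$, and a term-by-term manipulation of the resulting double binomial sum yields $\Prob\bigl(d_1\le a^+\mu\mid d_2=k\bigr)\le(1+O(p))f_n$ uniformly in $k\le a^+\mu$, hence $\mathrm{Var}(N)\le nf_n+O(p)\,n^2f_n^2$. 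Chebyshev then gives failure probability $O\bigl(1/(nf_n)\bigr)+O(p)$, and one must still check that $p=O(\log n/n)$ is dominated by $e^{-\Omega(\sqrt{p_0\log n})}$ so that the total failure probability remains $O(e^{-\Omega(\sqrt{2m/n})})$ as Lemma~\ref{lm1} requires. Without these two computations, your proposal is a correct plan rather than a proof.
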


The proof of Theorem \ref{tm} is given in \ref{proof}.

\textbf{Remark.} For $p_0 \gg 1$, $a(p_0)=1-\sqrt{2/p_0} +O(1/p_0)$ \cite{Kolokolnikov2013}. Thus, Theorem  \ref{tm} implies  that  the two sampling methods have the same asymptotic algebraic connectivity, $\frac{\lambda_2}{2m/n} \simeq 1-\sqrt{2/p_0}$ as $n\to \infty$ and $p_0\gg 1$. Note from \eqref{eq:G0lam} and \eqref{eq:Glam}, that $\lim_{n\to \infty} a_1(p_0,n) = \lim_{n\to \infty} a_2(p_0,n) = 1 - \sqrt{2/p_0}$.

\begin{figure}[t!]
\begin{center}
\includegraphics[width=0.7\columnwidth]{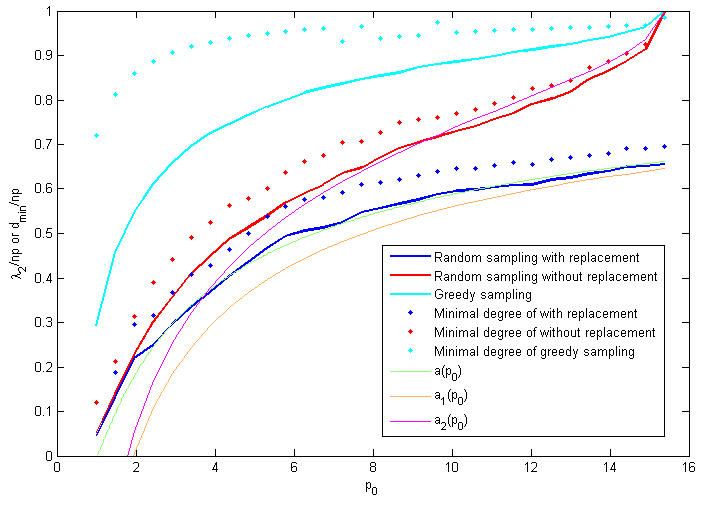}
\caption{A comparison of the Fiedler value, minimal degree, and estimates  \eqref{eq:G0lam}, \eqref{eq:Glam}, and \eqref{eq:lam2bign} for graphs generated via random sampling with/without replacement and greedy sampling for $n= 64$.}
\label{icml-simulatedvalid}
\end{center}
\vskip -0.2in
\end{figure}

\begin{figure}
 \begin{center}
 \subfigure [$n = 2^5$]{
\includegraphics[width=0.48\columnwidth]{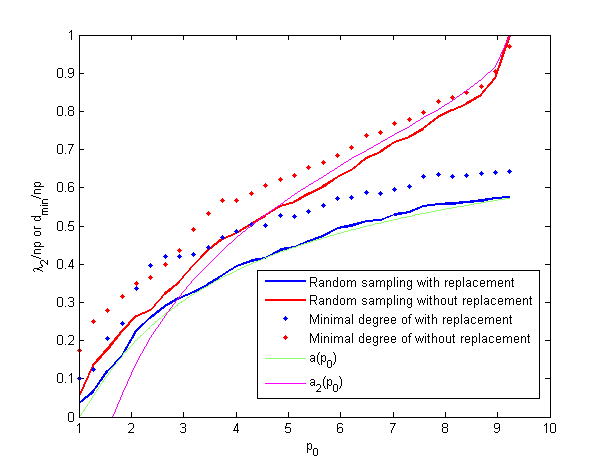}}
 \subfigure[$n = 2^7$]{
\includegraphics[width=0.48\columnwidth]{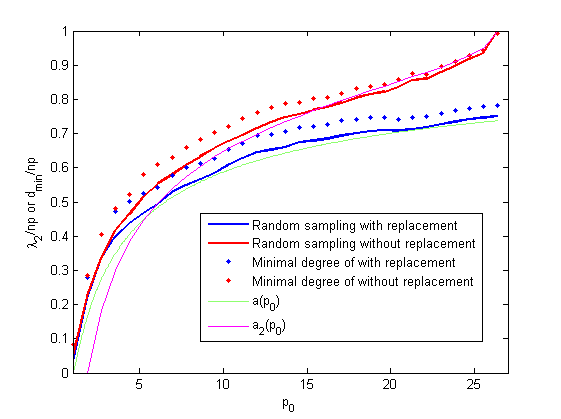}}
 \subfigure[$n = 2^9$]{
\includegraphics[width=0.48\columnwidth]{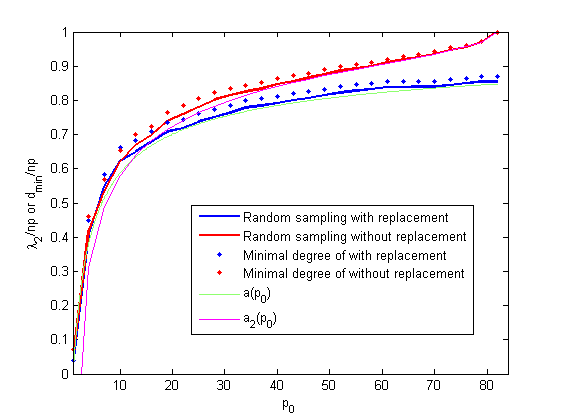}}
 \subfigure[$n = 2^{11}$]{
\includegraphics[width=0.48\columnwidth]{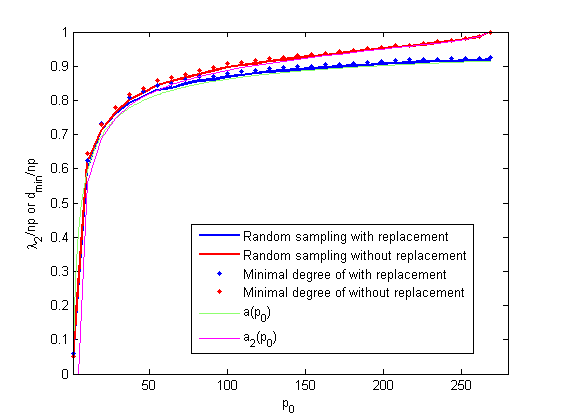}}
   \caption{Algebraic connectivity and minimal degree: Random sampling with replacement vs. Random sampling without replacement for $n = 2^5$, $2^7$, $2^9$, and $2^{11}$. The gaps among these sampling schemes vanish as $n\to \infty$.} \label{icml-simulatedvalid2}
\end{center}
\end{figure}


The main difference between $G(n,p)$ and $G_0(n,m)$ is the weak dependence pattern; the dependence of $d_i$ and $d_j$ only occurs on edge $(i,j)$ which only appears at most once for $G(n,p)$, but all of the $m$ edges can be $(i,j)$ for $G_0(n,m)$. However, we still have $d_i$ and $d_j$ are almost independent when $n$ is sufficiently  large, so the heuristic estimator using I.I.D. Normal distribution as an approximation is not unreasonable.

Theorem \ref{tm} is supported by Figures \ref{icml-simulatedvalid} and \ref{icml-simulatedvalid2},  where the Fiedler value, minimal degree, and various estimates,  \eqref{eq:G0lam}, \eqref{eq:Glam}, and \eqref{eq:lam2bign}, are plotted for varying edge sparsity, $p_0$.
For $G_0(n,m)$, we observe that $a(p_0)$ fits $d_{\min}$ and $\lambda_2$ pretty well for all $p_0$.
For $G(n,m)$, we observe that $a(p_0)$ fits $d_{\text{min}}$ and $\lambda_2$ well when $p_0$ is small, but when $p_0$ is large, the estimate give in \eqref{eq:Glam} is more reliable. In all cases, the Fiedler value for the graph, $G_\star$, generated by greedy sampling, is larger than that for randomly sampled graphs.

\section{Experiments} \label{sec:experiments}
In this section, we study three examples with both simulated and real-world data to illustrate the validity of the analysis above and applications of the proposed sampling schemes. The first example is with simulated data while the latter two consider real-world data from QoE evaluation. The code for the numerical experiment and the real-world datasets can be downloaded from \href{https://code.google.com/p/active-random-joint-sampling/}{https://code.google.com/p/active-random-joint-sampling/}.

\subsection{Simulated data}
This subsection uses simulated data to illustrate  the performance differences among the three sampling schemes. We randomly create a global ranking score as the ground-truth, uniformly distributed on [0,1] for $|V| =n$ candidates. In this way, we obtain a complete graph with ${ n \choose 2 }$ edges consistent with the true preference direction. We sample pairs from this complete graph using random sampling with/without replacement and the greedy sampling scheme. The experiments are repeated 1000 times and  ensemble statistics for the HodgeRank estimator \eqref{eq:HodgeRank} are recorded.  As we know the ground-truth score, the metric used here is the $L_2$-distance between the HodgeRank estimate and ground-true score, $\|\hat{x}-x^*\|$.

Figure \ref{icml-simulated1} (a) shows the mean $L_2$-distance and standard deviation  associated with the three sampling schemes for $n = 16$ (chosen to be consistent with the two real-world datasets considered later). The $x$-axes of the graphs are the number of edges, as measured by $p_0 = \frac{pn}{\log n} $, taken to be greater than one so that the graph is connected with high probability. From these experimental results, we observe  that the performance of random sampling without replacement is better than random sampling with replacement in all cases with smaller $L_2$-distance and smaller standard deviation.
As $p_0$ grows, the performance of the two random sampling schemes diverge. When the graph is sparse, the greedy sampling scheme shows better performance than random sampling with/without replacement. However, when the graphs become dense, random sampling without replacement performs qualitatively similar to greedy sampling.

\begin{figure}[t]
 \begin{center}
 \subfigure [$OP = 0\%$]{
\includegraphics[width=0.3\textwidth,height=9cm]{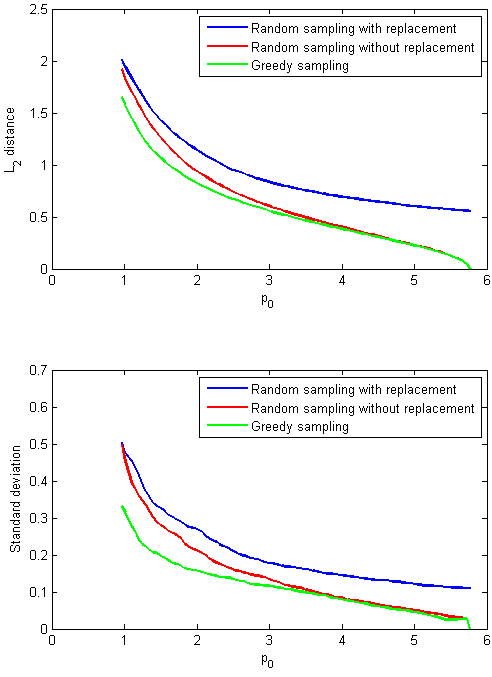}}
 \subfigure[$OP = 10\%$]{
\includegraphics[width=0.3\textwidth,height=9cm]{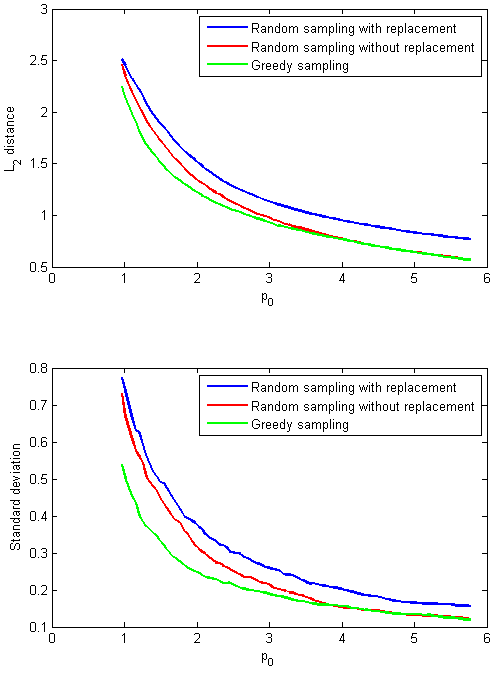}}
 \subfigure[$OP = 30\%$]{
\includegraphics[width=0.3\textwidth,height=9cm]{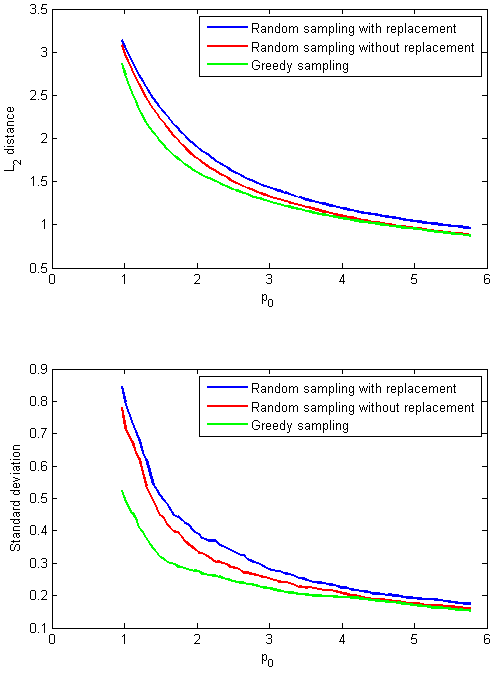}}
   \caption{The $L_2$-distance and standard deviation between ground-truth and HodgeRank estimate for
random sampling with/without replacement and greedy sampling for $n = 16$.  
} \label{icml-simulated1}
\end{center}
\end{figure}



To simulate real-world data contaminated by outliers, each binary comparison is independently flipped with a probability, referred to as \emph{outlier percentage} (OP). For $n=16$, with $\text{OP}=10\%$ and $30\%$,  we plot the number of sampled pairs against the $L_2$-distance and standard deviation between the ground-truth and HodgeRank estimate in Figure \ref{icml-simulated1} (b,c). As in the non-contaminated case, the greedy sampling strategy outperforms the random sampling strategy.  As OP increases, the performance gap among the three sampling schemes decreases.

\begin{figure*}[t!]
\begin{center}
\includegraphics[width=\textwidth]{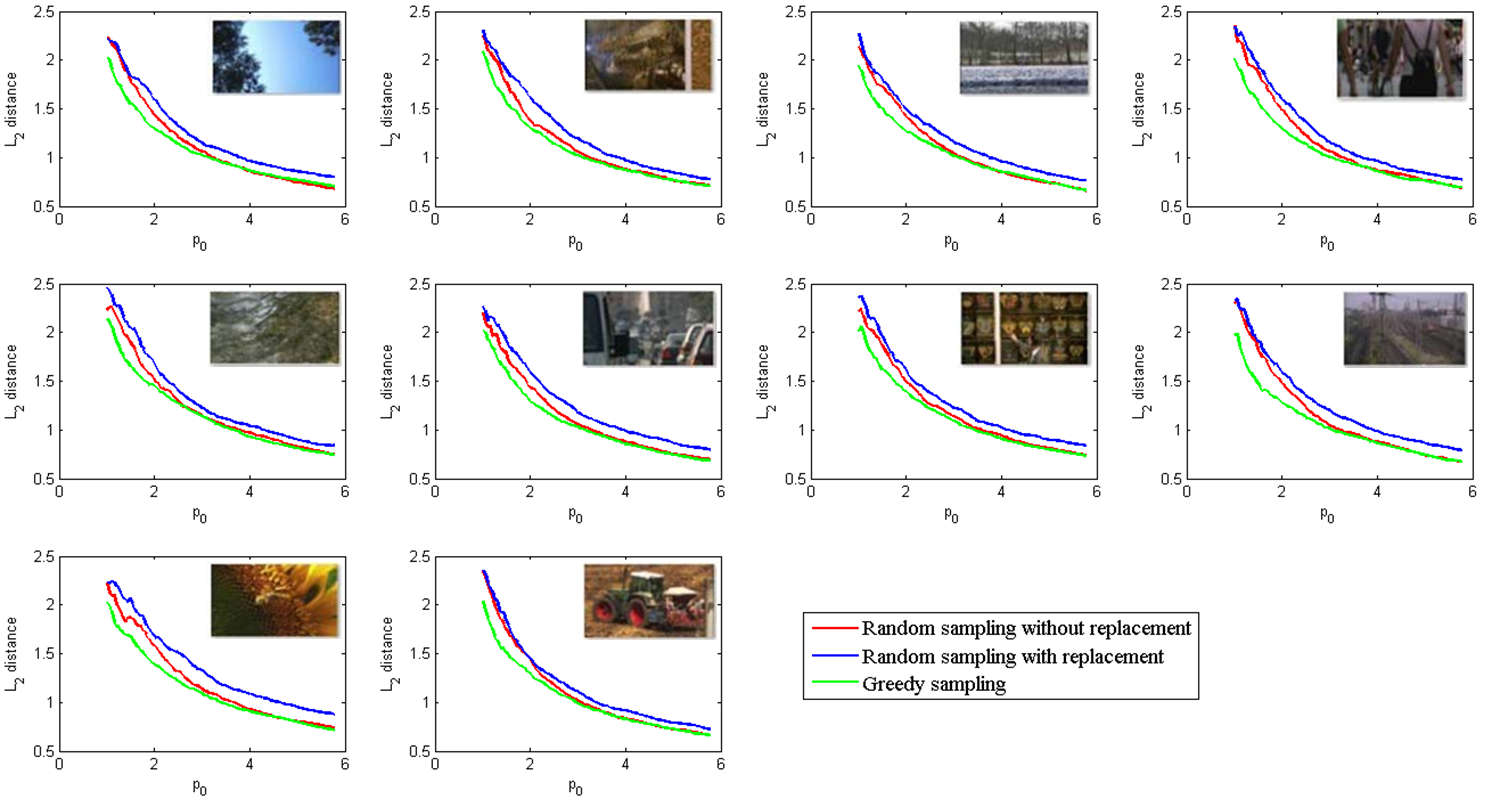}

\caption{Random sampling with/without replacement vs. greedy sampling for 10 reference videos in LIVE database \cite{LIVE}.}
\label{qoe}
\end{center}

\end{figure*}

\begin{figure}[t]
\begin{center}
\includegraphics[width= \textwidth]{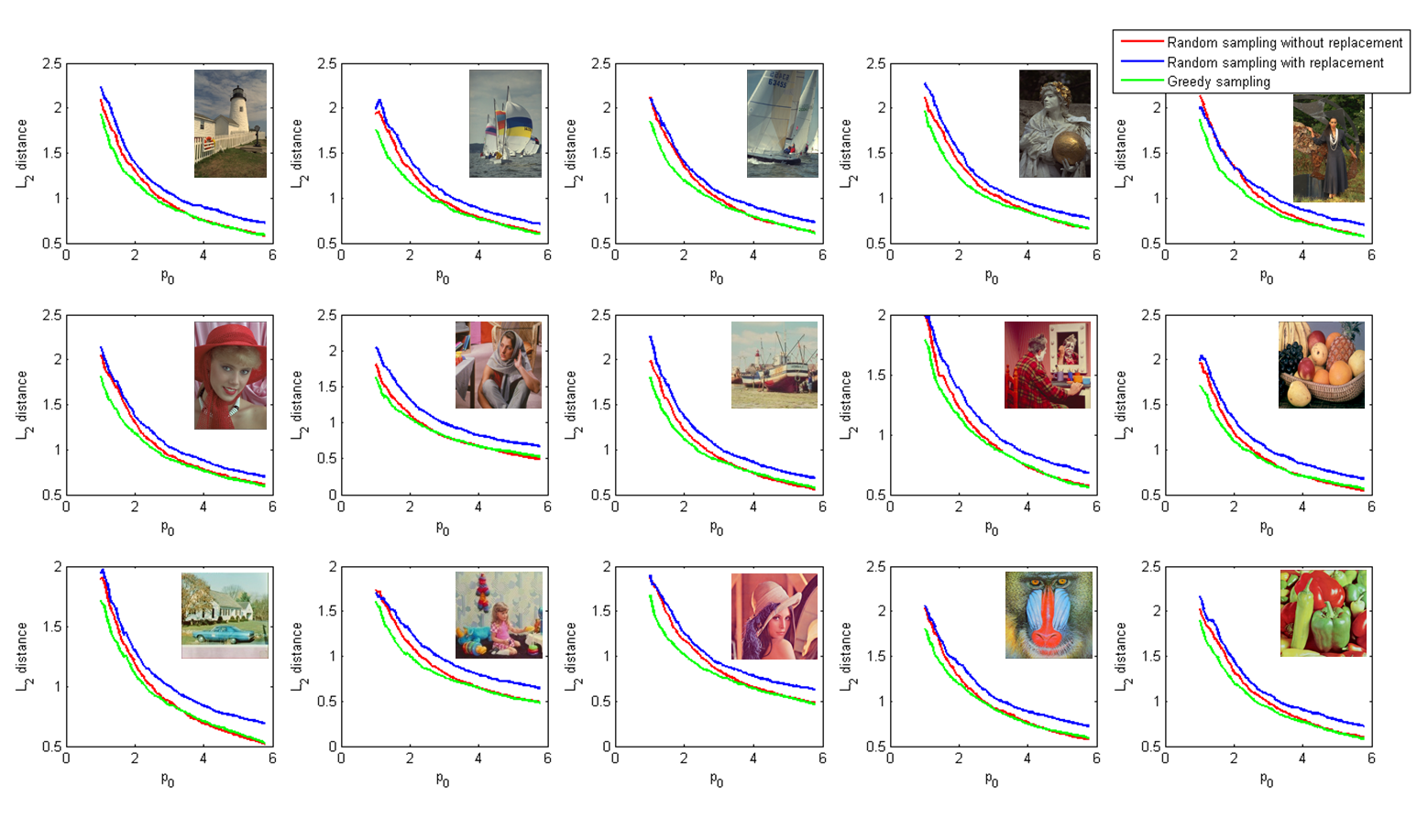}
\caption{Random sampling with/without replacement vs. greedy sampling for 15 reference images in LIVE \cite{LIVE} and IVC \cite{IVC} databases.}
\label{online}
\end{center}
\end{figure}

\subsection{Real-world data}

The second example gives a comparison
of the three sampling methods on a video quality assessment dataset \cite{MM11}.
It contains 38,400 paired comparisons of the LIVE dataset \cite{LIVE} from 209 random observers. An attractive property of this dataset is that the paired comparison data is complete and balanced. As LIVE includes 10 different reference videos and 15 distorted versions of each reference (obtained using four different
distortion processes --- MPEG-2 compression, H.264 compression, lossy
transmission of H.264 compressed bitstreams through simulated IP
networks, and lossy transmission of H.264 compressed bitstreams
through simulated wireless networks),
for a total of 160 videos, the complete comparisons of this video database
requires  $10 \times { 16 \choose 2 }  =1200 $ comparisons. Therefore, 38,400 comparisons correspond to 32 complete rounds.

As there is no ground-truth scores available, results obtained from all the paired comparisons are treated as the ground-truth. To ensure the statistical stability,
for each of the 10 reference videos, we sample using each of the three methods 100 times. Figure \ref{qoe} shows the experimental results of the 10 reference videos in LIVE database \cite{LIVE}). It is interesting to obtain similar observations on all of
these large scale data collections. Consistent with the simulated data, when the graph is sparse, greedy sampling performs better than both random sampling schemes; as the number of samples increases, random sampling without replacement exhibits similar performance in the prediction of global ranking scores.

The third example shows the sampling results on an imbalanced dataset for image quality assessment, which contains 15 reference images and 15 distorted versions of each reference, for a total of 240 images which come from two publicly available datasets, LIVE
\cite{LIVE} and IVC \cite{IVC}. The distorted images in LIVE dataset \cite{LIVE}
are obtained using five different distortion processes ---
JPEG2000, JPEG, White Noise, Gaussian Blur, and Fast
Fading Rayleigh, while the distorted images in IVC dataset \cite{IVC} are derived from
four distortion types --- JPEG2000, JPEG, LAR Coding, and
Blurring. In total, 328 observers,
each of whom performs a varied number of comparisons via the Internet, provide
43,266 paired comparisons. Since the number of paired comparisons in the is dataset is relatively large, 
all  15 paired comparison graphs are complete, though possibly imbalanced. This makes it possible for us to obtain comparable results of these three sampling schemes. As in the second example, quality scores obtained from all the 43,266 paired comparisons are treated as the ground-truth. Figure \ref{online} shows mean $L^2$-distance of 100 times on LIVE \cite{LIVE} and IVC \cite{IVC} databases, and it is easy to find that all these reference images agree well with the theoretical and simulated results we have provided.

%
%

\subsection{Discussion}
In terms of the stability of HodgeRank, random sampling \emph{without} replacement exhibits a performance curve between the greedy sampling scheme, proposed by  \cite{osting2013enhanced,Osting:2012b}, and  random sampling \emph{with} replacement. When the sampling rate is sparse, greedy sampling dominates; when the sample size is increased, random sampling \emph{without} replacement is  indistinguishable from greedy sampling, both of which dominate the random sampling \emph{with} replacement (the simplest I.I.D. sampling).

Therefore, in practical situations, we suggest first to adopt greedy sampling in the initial stage which leads to a graph with large Fiedler value, then use random sampling \emph{without} replacement to approximate the results of greedy sampling. Such a transition point should depend on the graph vertex set size, $n$, for example $p_0 := \frac{2m}{(n-1)\log n}\approx \frac{n}{2\log n}$ which suggests $p_0\approx 3$ for $n=16$ in our simulated and real-world examples. After all, this random sampling scheme is simpler and more flexible than greedy sampling and does not significantly reduce the accuracy of the HodgeRank estimate.


\section {Conclusion} \label{sec:conclusion}

This paper analyzed two simple random sampling schemes for the HodgeRank estimate, including random sampling \emph{with} replacement and random sampling \emph{without} replacement. We showed that for a finite graph when it is sparse, random sampling
\emph{without} replacement approaches its performance
lower bound as random sampling \emph{with} replacement;
when it is dense, random sampling \emph{without} replacement
approaches its performance upper bound as
greedy sampling. For large graphs, such performance gaps are vanishing
in that all three sampling schemes exhibit similar
performance.

Because random sampling relies only on a random subset of
pairwise comparisons, data collection can be trivially
parallelized. This simple structure makes it easy to adapt to new situations, such
as online or distributed ranking. Based on these observations, we suggest in applications first adopt greedy sampling method in the initial stage and random sampling \emph{without} replacement in the second stage. For very large graphs, random sampling \emph{with} replacement may become the best choice, after all, it is the simplest I.I.D. sampling and when $n$ goes to infinity, the gaps among these sampling schemes vanish. The sampling schemes enable us to
derive reliable global ratings in an efficient manner, whence
provide us a helpful tool for those who exploit
crowdsourceable paired comparison data for subjective studies.

%

\section*{Acknowledgments}
The research of Qianqian Xu was supported in part by National Natural Science Foundation of China: 61402019, and China Postdoctoral Science Foundation: 2015T80025. The research of Jiechao Xiong and Yuan Yao was supported in part by National Basic Research Program of China under grant 2015CB85600, 2012CB825501, Baidu IBD, Microsoft Research Asia, and NSFC grant 61370004, 11421110001 (A3 project).

\vspace{1cm}

\section*{References}
{\footnotesize
\bibliographystyle{elsarticle-num} 
\bibliography{sigproc} }
\newpage
\appendix
\section{Proof of Theorem \ref{tm}} \label{proof}

The following basic inequality is used extensively throughout the proofs, which can be found in \cite{Hoeffding1963}.

\begin{lem}\label{lm3}
(Chernoff-Hoeffding theorem) Assume that $X_i\in[0,1],i=1,\dots,n$ are independent and $EX_i = \mu$. For $\epsilon>0$, the following inequalities hold
\[P(\bar{X}_n\le \mu-\epsilon)\le e^{-nKL(\mu-\epsilon||\mu)},\]
\[P(\bar{X}_n\ge \mu+\epsilon)\le e^{-nKL(\mu+\epsilon||\mu)},\]
where $KL(p||q) = p\log(p/q)+(1-p)\log((1-p)/(1-q))$, is Kullback-Leibler divergence, and $\bar{X}_n = \frac{1}{n}\sum_{i=1}^n X_i$ is the sample mean.
\end{lem}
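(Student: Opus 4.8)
The plan is to establish the upper-tail bound by the classical exponential Markov (Chernoff) argument and then obtain the lower-tail bound by the reflection $X_i \mapsto 1-X_i$. Write $S_n = \sum_{i=1}^n X_i$ and abbreviate the target by $q = \mu+\epsilon$. We may assume $q \le 1$, since otherwise $P(\bar X_n \ge q) = 0$ and the claimed bound is vacuous. For any $t>0$, applying Markov's inequality to the nonnegative variable $e^{tS_n}$ and using independence to factor the joint moment generating function gives
\[
P(\bar X_n \ge q) = P\!\left(e^{tS_n} \ge e^{tnq}\right) \le e^{-tnq}\,E\!\left[e^{tS_n}\right] = e^{-tnq}\prod_{i=1}^n E\!\left[e^{tX_i}\right].
\]
To control each factor using only $X_i\in[0,1]$ and $EX_i=\mu$, I would invoke convexity of $x\mapsto e^{tx}$: on $[0,1]$ the function lies below the chord through its endpoints, so $e^{tx}\le (1-x)+xe^{t}$, and taking expectations yields $E[e^{tX_i}]\le (1-\mu)+\mu e^{t}$. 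Substituting produces
\[
P(\bar X_n \ge q) \le \left[\, e^{-tq}\big((1-\mu)+\mu e^{t}\big)\,\right]^n.
\]

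The heart of the argument is the optimization over $t>0$. Differentiating $g(t):= -tq + \log\big((1-\mu)+\mu e^{t}\big)$ and setting the derivative to zero gives $e^{t_\star}=\tfrac{q(1-\mu)}{\mu(1-q)}$, which exceeds $1$ (so $t_\star>0$) precisely because $q>\mu$. Back-substituting, one finds $(1-\mu)+\mu e^{t_\star}=\tfrac{1-\mu}{1-q}$, and a short algebraic reduction collapses $g(t_\star)$ exactly to $-\big[q\log(q/\mu)+(1-q)\log((1-q)/(1-\mu))\big]=-KL(q||\mu)$. Raising to the $n$-th power gives the first inequality, $P(\bar X_n \ge \mu+\epsilon)\le e^{-nKL(\mu+\epsilon||\mu)}$.

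For the lower tail I would set $Y_i=1-X_i$; these are independent, lie in $[0,1]$, and have mean $1-\mu$. Since the event $\{\bar X_n \le \mu-\epsilon\}$ coincides with $\{\bar Y_n \ge (1-\mu)+\epsilon\}$, the upper-tail bound already proved yields the factor $e^{-nKL(1-\mu+\epsilon||1-\mu)}$, and the symmetry $KL(1-p||1-r)=KL(p||r)$, immediate from the defining formula, identifies this exponent with $KL(\mu-\epsilon||\mu)$. The only nonroutine step, and hence the main obstacle, is the optimization: verifying that the optimal exponential tilt $t_\star$ makes the Chernoff exponent collapse exactly onto the Kullback--Leibler divergence. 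The Markov step, the convexity bound on the moment generating function, and the reflection used for the lower tail are all routine by comparison.
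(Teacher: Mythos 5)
Your proof is correct: the exponential Markov bound, the chord/convexity estimate $E[e^{tX_i}]\le (1-\mu)+\mu e^{t}$ (exactly how Hoeffding handles general $[0,1]$-valued variables beyond the Bernoulli case), the optimal tilt $e^{t_\star}=q(1-\mu)/\big(\mu(1-q)\big)$ collapsing the exponent to $-KL(q||\mu)$, and the reflection $X_i\mapsto 1-X_i$ for the lower tail all check out. The paper gives no proof of this lemma, citing Hoeffding (1963) instead, and your argument is precisely the classical one found there; the only gloss worth adding is that in the boundary case $\mu+\epsilon=1$ the stationary point sits at $t_\star=+\infty$, so one should take the infimum over $t>0$ (yielding $\mu^{n}=e^{-nKL(1||\mu)}$) rather than an interior critical point.
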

\begin{coll}
\[P(\bar{X}_n\le k\mu)\le e^{-n\mu(k\log(k) - k + 1))}, k<1\]
\[P(\bar{X}_n\ge k\mu)\le e^{-n\mu(k\log(k) - k + 1))}, k>1\]
\end{coll}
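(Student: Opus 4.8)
The plan is to derive both tail bounds directly from the Chernoff--Hoeffding theorem (Lemma \ref{lm3}) by choosing the deviation $\epsilon$ so that $\mu \mp \epsilon = k\mu$, and then to reduce the claim to a single elementary inequality that lower-bounds the Kullback--Leibler exponent by the stated Poisson-type exponent.

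First I would handle the lower tail, $k<1$. Setting $\epsilon = (1-k)\mu > 0$ so that $\mu - \epsilon = k\mu$, the first inequality of Lemma \ref{lm3} gives at once
\[ P(\bar{X}_n \le k\mu) \le e^{-n\,KL(k\mu||\mu)}. \]
For the upper tail, $k>1$ (with $k\mu<1$ so that the statement is meaningful), I would instead take $\epsilon = (k-1)\mu$ so that $\mu + \epsilon = k\mu$, and the second inequality of Lemma \ref{lm3} yields $P(\bar{X}_n \ge k\mu) \le e^{-n\,KL(k\mu||\mu)}$ with the very same divergence term. Hence in both regimes it suffices to establish the single algebraic inequality
\[ KL(k\mu||\mu) \ge \mu\big(k\log k - k + 1\big). \]

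The substantive step is this inequality. Expanding $KL(k\mu||\mu) = k\mu\log k + (1-k\mu)\log\frac{1-k\mu}{1-\mu}$, the term $k\mu\log k$ cancels against $\mu k\log k$ in the target exponent, so the claim reduces to showing that
\[ g := (1-k\mu)\log\frac{1-k\mu}{1-\mu} - (\mu - k\mu) \ge 0. \]
Writing $a = k\mu$ and $b = \mu$ and regarding $g = g(a) = (1-a)\log\frac{1-a}{1-b} - (b-a)$ as a function of $a$ with $b$ fixed, I would differentiate to obtain $g'(a) = \log\frac{1-b}{1-a}$ and observe that $g(b) = 0$. For the lower tail $a < b$ one has $g'(a) < 0$, so $g$ decreases down to its value $g(b)=0$ and therefore $g(a) \ge 0$; for the upper tail $a > b$ one has $g'(a) > 0$, so $g$ increases away from $g(b)=0$ and again $g(a) \ge 0$. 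This monotonicity argument disposes of both cases and completes the proof.

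I do not anticipate a serious obstacle: the result is essentially the standard multiplicative Chernoff bound, and the only genuinely nontrivial observation is that the Kullback--Leibler exponent $KL(k\mu||\mu)$ dominates the simpler exponent $\mu(k\log k - k + 1)$. The one point requiring care is the implicit domain constraint $0 < k\mu < 1$ (so that $1-k\mu > 0$ and every logarithm is well defined), which should be recorded as a standing hypothesis; the degenerate boundary cases $k\mu \to 1$ and $\mu \to 0$ then follow by continuity.
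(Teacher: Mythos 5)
Your proposal is correct and follows essentially the same route as the paper: both apply Lemma~\ref{lm3} with $\epsilon = |k-1|\mu$, expand the Kullback--Leibler divergence, cancel the $k\mu\log k$ terms, and reduce to showing the same residual function $(1-k\mu)\log\frac{1-k\mu}{1-\mu} + (k-1)\mu$ is nonnegative. The only difference is cosmetic: the paper fixes $k$ and proves nonnegativity via convexity in $\mu$ (checking $f(0)=f'(0)=0$ and $f''(\mu)>0$), whereas you fix $\mu$ and argue monotonicity in $a=k\mu$ on either side of $a=\mu$ --- an equally valid one-derivative version of the same calculus check, with the same domain caveat $0<k\mu<1$ that the paper records as $\mu\in(0,\min(1,1/k))$.
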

\begin{proof}
$KL( k\mu||\mu) = k\mu\log(k) + (1-k\mu)\log(\frac{1-k\mu}{1-\mu})$. Defining $f(\mu) := (1-k\mu)\log(\frac{1-k\mu}{1-\mu}) + (k-1)\mu$, we compute
\[f(0) = 0, \quad f'(0) = 0, \ \text{ and } \  f''(\mu) = \frac{(1-k)^2}{(1-k\mu)(1-\mu)^2} > 0.\]
For all  $\mu\in(0,\min(1,1/k))$, we have that
\[f(\mu)> 0 \iff e^{-n  (1-k\mu)\log(\frac{1-k\mu}{1-\mu})} < e^{-n\mu(k-1) }.\] The result then follows from Lemma \eqref{lm3}.
\end{proof}

Throughout this section, $\E[\cdot]$ is used for expectation of random variables. Next, we  prove Lemma \ref{lm2}.

\begin{proof}[Proof of Lemma \ref{lm2}]
Our proof essentially follows \cite{FeigeOfek05} for the Erd\"{o}s-R\'{e}nyi random graph $G(n,p)$. For $G_0(n,m)$, consider $A = \sum_{k=1}^{m} A_k$ where $A_k$ is the adjacency matrix of I.I.D. edge samples. Hence
\[v^TAv = \sum_{k=1}^m v^TA_kv = \sum_{k=1}^m 2v_{i_k}v_{j_k}\]
 is the sum of I.I.D. variables.  Let $d := 2m/n$ denote  the expected degree of each graph vertex.

The proof strategy is as follows. To reach a bound for $\max_{v\in S}  v^T A v$, one needs a discrete cover $T$ of the set $S$. We turn to an upper bound $\max_{u,v\in T} u^T A v\geq c (1-\epsilon)^2 \sqrt{2m/n}$. However, any cover, $T$, has size $e^{O(n)}$ and therefore directly using Bernstein's inequality and the union bound doesn't work. Following \cite{FeigeOfek05}, we divide the set $\{(u_i,v_j)\colon u,v\in T\}$ into two parts: (1) light couples with $|u_i  v_j| \leq \sqrt{d}/n$, which can be bounded using Bernstein's inequality and (2) heavy couples with $| u_i   v_j | >\sqrt{d}/n$ but satisfying bounded degree and discrepancy properties. These two parts make up of the variation in $u^T A v$ which will lead to the bound in Lemma \ref{lm2}.

Following \cite{FeigeOfek05}, the first step is to reduce the set of vectors into a finite, yet exponentially large space. Let $S = \{v\bot1:\|v\|\le1\}$ and for fixed some $0 < \epsilon < 1$, define a grid which approximates $S$:
\[T=\left\{x\in\left(\frac{\epsilon}{\sqrt{n}}\ \Z\right)^n\colon \sum_i x_i =0,\|x\|\le1\right\}.\]

\noindent \textbf{Claim \cite{FeigeOfek05}.} The number of vectors in T is bounded by $e^{c_\epsilon n}$ for some $c_\epsilon$ which depends on $\epsilon$. If for every $u, v \in T~~ u^TAv \le c$, then for every $x \in S,~ x^TAx \le c/(1-\epsilon)^2$.

\medskip

\noindent It remains to show that

\medskip

\noindent \textbf{Claim.} $\exists c$, almost surely, $\forall u,v\in T, u^TAv \le c\sqrt{2m/n}.$

\medskip

\noindent  To prove this claim, we divide the set $\{(u_i,v_j)\colon u,v\in T\}$ into two parts:
 \begin{itemize}
 \item[(1)] light couples with $|u_i v_j| \leq \sqrt{d}/n$ and
 \item[(2)] heavy couples with $| u_i v_j| >\sqrt{d}/n$,
 \end{itemize}
 Let
 \begin{align*}
 L_k &= u_{i_k}v_{j_k}1_{\{|u_{i_k}v_{j_k}| \leq \frac{\sqrt{d}}{n}\}}+u_{j_k}v_{i_k}1_{\{|u_{j_k}v_{i_k}| \leq \frac{\sqrt{d}}{n}\}} \\
 H_k &= u_{i_k}v_{j_k}1_{\{|u_{i_k}v_{j_k}| > \frac{\sqrt{d}}{n}\}}+u_{j_k}v_{i_k}1_{\{|u_{j_k}v_{i_k}| > \frac{\sqrt{d}}{n}\}}.
 \end{align*}
 Then
\[u^TAv = \sum_{k=1}^m L_k + \sum_{k=1}^m H_k.\]

\bigskip

\noindent \textbf{Bound on the contribution of light couples.}
It's easy to compute
\begin{equation*}
\E L_k + \E H_k = \frac{2}{n(n-1)}\sum_{i\neq j}u_iv_j = -\frac{2}{n(n-1)} \langle u,v\rangle \nonumber
\end{equation*}
\begin{eqnarray*}
|\E H_k| = \frac{2}{n(n-1)} |\sum_{i\neq j}u_{i}v_{j}1_{\{|u_{i}v_{j}| \geq \frac{\sqrt{d}}{n}\}}|
  & \le & \frac{2}{n(n-1)}\sum_{i,j:|u_{i}v_{j}| \geq \frac{\sqrt{d}}{n}}| u_{i}^2v_{j}^2/\frac{\sqrt{d}}{n}| \\
  & \le & \frac{2}{(n-1)\sqrt{d}}\sum_{i}u_{i}^2 \sum_j v_{j}^2 \\
  & \le & \frac{2}{(n-1)\sqrt{d}}.
\end{eqnarray*}
So $|\E \sum_{k=1}^m L_k| \le m|\E H_k| + m |\E L_k + \E H_k| = O(\sqrt{d})$, as $d = 2m/n$.  We also have that
\begin{eqnarray*}
\mathrm{Var}(L_k) \le \E(L_k)^2 &\le& 2 \E \left((u_{i_k}v_{j_k}1_{\{|u_{i_k}v_{j_k}| \leq \frac{\sqrt{d}}{n}\}})^2 +¡¡(u_{j_k}v_{i_k}1_{\{|u_{j_k}v_{i_k}| \leq \frac{\sqrt{d}}{n}\}})^2 \right) \\\nonumber
&\le& \frac{4}{n(n-1)}\sum_{i}u^2_i \big( \sum_{j\neq i} v^2_j \big) \\\nonumber
&\le& \frac{4}{n(n-1)}.  \nonumber
\end{eqnarray*}
From definition, $|L_k|\le2\frac{\sqrt{d}}{n}$, so $|L_k-\E L_k| \le |L_k|+|\E L_k|\le 4\frac{\sqrt{d}}{n}\triangleq M$, Then Bernstein's inequality gives
\begin{eqnarray*}
 P\left(\sum_{k=1}^m L_k-\E\sum_{k=1}^m L_k>c\sqrt{d}\right) &\le& \exp\left(-\frac{c^2d}{2m \mathrm{Var}(L_k)+2Mc\sqrt{d}/3}\right)\\\nonumber
 &\le& \exp\left(-\frac{c^2d}{\frac{8m}{n(n-1)}+8cd/3n}\right)\\
 &\le& \exp(-O(cn)).
\end{eqnarray*}
So taking a union bound over $u,v \in T$, the contribution of light couples is bounded by $c\sqrt{d}$ with probability at least $1-e^{-O(n)}$.

\bigskip

\noindent \textbf{Bound on the contribution of heavy couples.}
As shown in \cite{FeigeOfek05}, if the random graph satisfies the bounded degree and discrepancy properties,
then the contribution of heavy couples is bounded by $O(\sqrt{d})$. We next define these two properties and prove that they hold.

\medskip

\noindent \textbf{Bounded degree property.} We say the \emph{bounded degree property} holds if every vertex has a degree bounded by $c_1 d$ (for some $c_1 > 1$). Using the fact $d_i \sim B(m,\mu),~\mu=2/n$, together with Lemma \ref{lm3} and $m>n\log n/2$, we have
\begin{equation*}
P(d_i\ge 6m/n) \le e^{-m\mu(3\log(3) - 2)}\le e^{-\frac{4m}{n}} \le n^{-2}.
\end{equation*}
So taking a union bound over $i$, we get with probability at least $1-1/n,~\forall i,~d_i\le 3d$.

\medskip

\noindent \textbf{Discrepancy property.}
Let $A,B\subseteq [n]$ be  disjoint and $e(A,B)$ be a random variable which denotes the number of edges between A and B. Then $e(A,B) \sim B(m,\frac{|A|\cdot|B|}{C_n^2})$. So, $\mu(A,B) = p|A|\cdot|B|$, with $p = \frac{m}{C_n^2}=\frac{d}{n-1}$, is the expected value of $e(A,B)$. Let $\lambda(A,B) = e(A,B)/\mu(A,B)$.  We say that the  \emph{dispcrepancy property} holds if there exists a constant $c$ such that for all $A, B \subseteq [n]$ with  $|B| \ge |A|$ one of the following holds:
\begin{enumerate}
  \item $\lambda(A,B) \le 4$,
  \item $e(A,B)\cdot \log \lambda(A,B) \le c\cdot |B| \cdot \log \frac{n}{|B|}$.
\end{enumerate}
We will show that the discrepancy property holds with  probability of at least $1-2/n$.
Write $a = |A|, b= |B|$, and suppose $b\ge a$. We assume the  bounded degree property holds with $c_1=3$ here.

\medskip

Case 1: $b > 3n/4$. Then $\mu(A,B) = ab \cdot \frac{d}{n-1} \ge \frac{3ad}{4}$. While $e(A,B)\le a\cdot 3d$ as each vertex in $A$ has degree bounded by $3d$, so $\lambda(A,B)\le 4$.

\medskip

Case 2: $b \le 3n/4$. Using the fact $e(A,B) \sim B(m,q)$, with $q=\frac{ab}{C_n^2}$ and Lemma \ref{lm3},  we have for $k\ge2$,
\begin{equation*}
P(e(A,B)\ge k\mu(A,B))\le e^{-mq(k\log(k) + (1-k))}\le e^{-\mu(A,B)k\log(k)/4}.
\end{equation*}
Then the union bound over all $A,B$ with size $a,b$ for $e(A,B)\ge k\mu(A,B)$ is
\begin{equation} \label{eq:case2_up1}
C_n^aC_n^be^{-\mu(A,B)k\log(k)/4}\le e^{-\mu(A,B)k\log(k)/4}\left(\frac{ne}{a}\right)^a\left(\frac{ne}{b}\right)^b.
\end{equation}
We want the right hand side of \eqref{eq:case2_up1} to be smaller than $1/n^3$, so it is enough to let
\begin{equation} \label{eq:case2_up2}
\mu(A,B)k\log(k)/4 \ge a\left(1+\log\frac{n}{a}\right)+b\left(1+\log\frac{n}{b}\right)+3\log n.
\end{equation}
Next, we are to give an upper bound for the right hand side of \eqref{eq:case2_up2}. Using the fact $x\log(n/x)$ is increasing in $(0,n/e)$ and decreasing in $(n/e,3n/4)$, we have for $b\le n/e$,
  \begin{equation*}
    a\left(1+\log\frac{n}{a}\right)+b\left(1+\log\frac{n}{b}\right)+3\log n    \le 4b\log\frac{n}{b}+3\log n \le 7b\log\frac{n}{b},
  \end{equation*}
and for $3n/4\ge b > n/e$,
  \begin{eqnarray*}
    a\left(1+\log\frac{n}{a}\right)+b\left(1+\log\frac{n}{b}\right)+3\log n    &\le& 2\cdot3n/4 + 2\cdot n/e+3\log n\\
    &\le& 11 \cdot3/4 \cdot \log(4/3)n \\
    &\le& 11b\log\frac{n}{b}.
  \end{eqnarray*}
Therefore to make \eqref{eq:case2_up2} valid, it suffices to assume $k\log k>\frac{44}{\mu(A,B)}b\log\frac{n}{b}$. Let $k_0\geq 2$ be the minimal number that satisfies this inequality. Using the union bound over all the possible $a, b$, we get the following conclusion. With probability of at least $1-1/n$, for every choice of A, B ($b \le 3n/4$) the following holds,
$$e(A,B)\le k_0 \mu(A,B).$$
If $k_0 = 2$ then we are done, otherwise $k_0\log(k_0)\mu(A,B) = O(1)b\log\frac{n}{b}$, so
\begin{equation*}
e(A,B)\cdot \log \lambda(A,B) \le e(A,B)\log(k_0)\le k_0 \log(k_0)\mu(A,B) = O(1)|B|\log\frac{n}{|B|},
\end{equation*}
as desired to satisfy the second condition for the discrepancy property.
\end{proof}

\bigskip

\begin{proof}[Proof of Theorem \ref{tm}]
The proof for $G(n,m)$ follows from \cite{Kolokolnikov2013},  which establishes the result for the Erd\"{o}s-R\'{e}nyi random model $G(n,p)$. Here we follow the same idea for $G_0(n,m)$, using an exponential concentration inequality (Lemma \ref{lm3}) to derive lower and upper bounds on $d_{\text{min}}$. The lower bound is directly from a union bound of independent argument. The upper bound deals  with weak dependence using the Chebyshev-Markov inequality.

Using Lemma \ref{lm1}, we only need to study the asymptotic limit of $\frac{d_{\text{min}}}{2m/n}$.
As $d_i \sim B(m,\mu), \mu=2/n$,  using Lemma \ref{lm3} we have
\begin{equation}\label{eq:prob-upper}
P(d_i\le 2am/n) \le e^{-m\mu(a\log(a) + (1-a))} = e^{\frac{2m}{n}\mathcal{H}(a)}.
\end{equation}
where $\mathcal{H}(a)=a-a\log(a)-1$.

In the other direction, suppose $i_0=2am/n$ is an integer, we have
\begin{eqnarray} \nonumber
P(d_i\le 2am/n)  &\ge& C_m^{i_0}(2/n)^{i_0}(1-2/n)^{m-i_0} \\\nonumber
    &\ge& \frac{(m-i_0)^{i_0}}{e\sqrt{i_0}(i_0/e)^{i_0}}(2/(n-2))^{i_0}(1-2/n)^{m} \\ \nonumber
    &=& \frac{1}{e\sqrt{i_0}}e^{i_0(1+\log((n/a-2)/(n-2)))+m\log(1-2/n)} \\
    \label{eq:logPoly}
    &\ge& \frac{1}{e\sqrt{i_0}}e^{i_0(1-\log(a))-2m/n-4m/n^2} \\ \nonumber
    &>& \frac{1}{e^3\sqrt{i_0}}e^{i_0(1-\log(a)-1/a)} \\ \nonumber
    &=& \frac{1}{e^3\sqrt{a\frac{2m}{n}}}e^{\frac{2m}{n}\mathcal{H}(a)}. \\ \nonumber
\end{eqnarray}
The inequality in \eqref{eq:logPoly} follows from $\log(1-x)\ge-x-x^2$ for all $x\in[0,2/3]$ and the assumption that $n\ge3$. The last inequality is due to $m<n^2/2$.

If $2am/n$ is not an integer, we can still have
\begin{equation}\label{eq:prob-lower}
P(d_i\le 2am/n) \ge \frac{c}{\sqrt{2m/n}}e^{\frac{2m}{n}\mathcal{H}(a)}.
\end{equation}
Equations (\ref{eq:prob-upper}) and (\ref{eq:prob-lower}) give an estimate for $P(d_i\le 2am/n)$.

Now let $a^{\pm} = a(p_0)\pm 1/\sqrt{2m/n}$, Taylor's theorem gives
\[\mathcal{H}(a^{\pm}) = \mathcal{H}(a(p_0)) \pm \frac{\mathcal{H}'(a_{\pm})}{\sqrt{2m/n}},\]
where $a_+\in(a(p_0),a(p_0)+1/\sqrt{2m/n})$, $a_-\in(a(p_0)-1/\sqrt{2m/n},a(p_0))$, and $\mathcal{H}'(a) = -\log(a)$ is the derivative of $\mathcal{H}$. Note $p_0\mathcal{H}(a(p_0))=-1$, so
\begin{equation*}
P\left(d_{\text{min}} \le a(p_0)\frac{2m}{n}-\sqrt{\frac{2m}{n}}\right) \le ne^{(2m/n)\mathcal{H}(a^-)}= e^{-\sqrt{2m/n}\mathcal{H}'(a_{-})} = O(e^{-\Omega(\sqrt{2m/n})}). \nonumber
\end{equation*}
Therefore, with probability at least $1-O(e^{-\Omega(\sqrt{2m/n})})$,
\begin{equation}
\label{eq:dminGEQ}
d_{\text{min}} \ge a(p_0)\frac{2m}{n}-\sqrt{\frac{2m}{n}}.
\end{equation}

Now we prove the reverse direction. Let $f_n = P(d_{\text{min}} \le a^+\frac{2m}{n})$, $X_i = \mathbf{1}_{\{d_i\le a^+\frac{2m}{n}\}}$ and $N_0=\sum_{i=1}^nX_i$. So $\mu_0 = \E N_0 = nf_n$. Chebyshev's inequality implies that
\begin{equation} \label{eq:cheby}
P(|N_0-\mu_0|>nf_n/2) \le \frac{4 \mathrm{Var}(N_0)}{n^2f_n^2},
\end{equation}
\begin{equation*}
\mathrm{Var}(N_0) = \sum_{i=1}^n \mathrm{Var}(X_i) +2\sum_{i<j} \mathrm{Cov}(X_i,X_j)= nf_n(1-f_n) + n(n-1) \mathrm{Cov}(X_1,X_2).
\end{equation*}
Next, we are going to claim
$$\mathrm{Cov}(X_1,X_2)\le O(1)pf_n^2,$$
\emph{i.e.} $P(X_1=1,X_2=1)\le (1+O(1)p)f_n^2$.

It is enough to prove $\forall k\le a^+\frac{2m}{n}$
\[P(d_1\le a^+\frac{2m}{n}|d_2=k) \le (1+O(1)p)f_n.\]
Note the conditional distribution of $d_1$ given $d_2=k$ is
$$B(k,1/(n-1)) + B(m-k,2/(n-1)),$$
\begin{align*}
&P(d_1\le a^+\frac{2m}{n}|d_2=k) \\
&=\sum_{i=0}^{k}C_k^i(\frac{1}{n-1})^i(\frac{n-2}{n-1})^{k-i} \sum_{j=0}^{a^+\frac{2m}{n}-i}C_{m-k}^{j}(\frac{2}{n-1})^j(\frac{n-3}{n-1})^{m-k-j}\\ \nonumber
&=\sum_{s=0}^{a^+\frac{2m}p{n}}\sum_{i=0}^{k\wedge s}C_k^i(\frac{1}{n-1})^i(\frac{n-2}{n-1})^{k-i}
C_{m-k}^{s-i}(\frac{2}{n-1})^{s-i}(\frac{n-3}{n-1})^{m-k-s+i}\\ \nonumber
&=\sum_{s=0}^{a^+\frac{2m}{n}}\sum_{i=0}^{k\wedge s}C_k^iC_{m-k}^{s-i}(\frac{1}{2})^i(\frac{n-2}{n-3})^{k-i}
(\frac{2}{n-1})^{s}(\frac{n-3}{n-1})^{m-s}\\ \nonumber
 &\le\sum_{s=0}^{a^+\frac{2m}{n}}\sum_{i=0}^{k\wedge s}C_k^iC_{m-k}^{s-i}(\frac{n-2}{n-3})^{k}
(\frac{n}{n-1})^s(\frac{2}{n})^{s}(\frac{n-2}{n})^{m-s}\\ \nonumber
  &\le\sum_{s=0}^{a^+\frac{2m}{n}}C_m^s(\frac{n}{n-1})^{s+k}(\frac{2}{n})^{s}(\frac{n-2}{n})^{m-s}\\ \nonumber
  &\le(\frac{n}{n-1})^{2a^+\frac{2m}{n}}\sum_{s=0}^{a^+\frac{2m}{n}}C_m^s(\frac{2}{n})^{s}(\frac{n-2}{n})^{m-s}\\ \nonumber
  &=(1+O(1)p)f_n.
\end{align*}

Hence, we get $\mathrm{Var}(N_0)\le nf_n+O(1)n^2f_n^2p$, and \eqref{eq:cheby} gives
\[P(|N_0-\mu_0|>nf_n/2) \le \frac{4}{nf_n}+4O(1)p.\]
Note that $nf_n \ge \frac{c}{\sqrt{2m/n}}e^{\sqrt{2m/n}\mathcal{H}'(a_{+})} \rightarrow \infty$, so with probability at least $1-O(e^{-\Omega(\sqrt{2m/n})})$, the graph has at least $nf_n/2\rightarrow\infty$ vertices satisfying
\[d_{i} \le a(p_0)\frac{2m}{n}+\sqrt{\frac{2m}{n}}.\]
Clearly $d_{\text{min}}$ also satisfies this statement. Combining this result with \eqref{eq:dminGEQ}, we have that with probability at least $1-O(e^{-\Omega(\sqrt{2m/n})})$,
\[|d_{\text{min}} - a(p_0)\frac{2m}{n}| \le \sqrt{\frac{2m}{n}},\]
as desired.
\end{proof}


%

%
\end{document}